\definecolor{cycle2}{RGB}{55, 126, 184}
\newcolumntype{P}[1]{>{\centering\arraybackslash}p{#1}}
\newcommand{\pluseq}{\mathrel{+}=}
\newtheorem{axiom}{Axiom}
\begin{document}

\title{GraphSVX: Shapley Value Explanations for Graph Neural Networks}


\author{Alexandre Duval \and
Fragkiskos D. Malliaros}

\institute{Université Paris-Saclay, CentraleSupélec, Inria, Gif-Sur-Yvette, France \\
\email{alexandre.duval@centralesupelec.fr} \\
\email{fragkiskos.malliaros@centralesupelec.fr}}

\maketitle              

\begin{abstract}
Graph Neural Networks (GNNs) achieve significant performance for various learning tasks on geometric data due to the incorporation of graph structure into the learning of node representations, which renders their comprehension challenging. In this paper, we first propose a unified framework satisfied by most existing GNN explainers. Then, we introduce GraphSVX, a post hoc local model-agnostic explanation method specifically designed for GNNs. GraphSVX is a decomposition technique that captures the ``fair'' contribution of each feature and node towards the explained prediction by constructing a surrogate model on a perturbed dataset. It extends to graphs and ultimately provides as explanation the Shapley Values from game theory. Experiments on real-world and synthetic datasets demonstrate that GraphSVX achieves state-of-the-art performance compared to baseline models while presenting core theoretical and human-centric properties. 
\end{abstract}

\section{Introduction}
Many aspects of the everyday life involve data without regular spatial structure, known as non-euclidean or geometric data, such as social networks, molecular structures or citation networks \cite{cho2011friendship, backstrom2011supervised, duvenaud2015convolutional}. These datasets, often represented as graphs, are challenging to work with because they require modelling rich relational information on top of node feature information \cite{zhou2018graph, zhang2020deep}. Graph Neural Networks (GNNs) are powerful tools for representation learning of such data. They achieve state-of-the-art performance on a wide variety of tasks \cite{defferrard2016convolutional, xu2019spatio, zhang2018link} due to their recursive message passing scheme, where they encode information from nodes and pass it along the edges of the graph. Similarly to traditional deep learning frameworks, GNNs showcase a complex functioning that is rather opaque to humans. As the field grows, understanding them becomes essential for well known reasons, such as ensuring privacy, fairness, efficiency, and safety \cite{o2016weapons, kim2015interactive, duval2019explainable}.

\par While there exist a variety of explanation methods \cite{saltelli2002sensitivity, simonyan2013deep, goldstein2015peeking, shrikumar2017learning, selvaraju2017grad, zhang2018top}, they are not well suited for geometric data as they fall short in their ability to incorporate graph topology information. \cite{baldassarre2019explainability, pope2019explainability} have proposed extensions to GNNs, but in addition to limited performance, they require model internal knowledge and show gradient saturation issues due to the discrete nature of the adjacency matrix. 
\par GNNExplainer \cite{ying2019gnnexplainer} is the first explanation method designed specifically for GNNs. It learns a continuous (and a discrete) mask over the edges (and features) of the graph by formulating an optimisation process that maximizes mutual information between the distribution of possible subgraphs and GNN prediction. More recently, PGExplainer \cite{luo2020parameterized} and GraphMask \cite{schlichtkrull2020interpreting} generalize GNNExplainer to an inductive setting; they use re-parametrisation tricks to alleviate the ``introduced evidence" problem \cite{dabkowski2017real}--- i.e. continuous masks deform the adjacency matrix and introduce new semantics to the generated graph. Regarding other approaches; GraphLIME \cite{huang2020graphlime} builds on LIME \cite{ribeiro2016should} to provide a non-linear explanation model; PGM-Explainer \cite{vu2020pgm} learns a simple Bayesian network handling node dependencies; XGNN \cite{yuan2020xgnn} produces model-level insights via graph generation trained using reinforcement learning.

\par Despite recent progress, existing explanation methods do not relate much and show clear limitations. Apart from GNNExplainer, none considers node features together with graph structure in explanations. Besides, they do not present core properties of a ``good" explainer \cite{molnar2020interpretable} (see Sec. \ref{_Rel}). Since the field is very recent and largely unexplored, there is little certified knowledge about explainers' characteristics. It is, for instance, unclear whether optimising mutual information is pertinent or not. Overall, this often yields explanations with a poor signification, like a probability score stating how essential a variable is \cite{ying2019gnnexplainer, luo2020parameterized, schlichtkrull2020interpreting}. Existing techniques not only lack strong theoretical grounds, but also do not showcase an evaluation that is sophisticated enough to properly justify their effectiveness or other desirable aspects \cite{robnik2018perturbation}. Lastly, little importance is granted to their human-centric characteristics \cite{miller2017explainable}, limiting the comprehensibility of explanations from a human perspective. 

\par In light of these limitations, first, we propose a unified explanation framework encapsulating recently introduced explainers for GNNs. It not only serves as a connecting force between them but also provides a different and common view of their functioning, which should inspire future work. In this paper, we exploit it ourselves to define and endow our explainer, GraphSVX, with desirable properties.
More precisely, GraphSVX carefully constructs and combines the key components of the unified pipeline so as to jointly capture the average marginal contribution of node features and graph nodes towards the explained prediction. We show that GraphSVX ultimately computes, via an efficient algorithm, the \textit{Shapley values} from game theory \cite{shapley1953value}, that we extend to graphs. The resulting unique explanation thus satisfy several theoretical properties by definition, while it is made more human-centric through several extensions. In the end, we evaluate GraphSVX on real-world and synthetic datasets for node and graph classification tasks. We show that it outperforms existing baselines in explanation accuracy, and verifies further desirable aspects such as robustness or certainty. \\

\noindent \textbf{Source code.} The source code is available at \textbf{\url{https://github.com/AlexDuvalinho/GraphSVX}}.

\label{_Intro}

\section{Related Work}
Explanations methods specific to GNNs are classified into five categories of methods according to \cite{yuan2020explainability}: gradient-based, perturbation, decomposition, surrogate, and model-level. We utilise the same taxonomy in this paper to position GraphSVX. \\

\noindent
\textbf{Decomposition methods} \cite{baldassarre2019explainability, pope2019explainability} distribute the prediction score among input features using the weights of the network architecture, through backpropagation.
Despite offering a nice interpretation, they are not specific to GNNs and present several major limits such as requiring access to model parameters or being sensitive to small input changes, like \textbf{gradient-based methods} discussed in Sec. \ref{_Intro}.  \\

\noindent
\textbf{Perturbation methods} \cite{ying2019gnnexplainer, luo2020parameterized, schlichtkrull2020interpreting} monitor variations in model prediction with respect to different input perturbations. Such methods provide as explanation a continuous mask over edges (features) holding importance probabilities learned via a simple optimisation procedure, affected by the introduced-evidence problem. \\

\noindent
\textbf{Surrogate methods} \cite{huang2020graphlime, vu2020pgm} 
approximate the black box GNN model locally by learning an interpretable model on a dataset built around the instance of interest $v$ (e.g., neighbours). Explanations for the surrogate model are used as explanations for the original model. For now, such approaches are rather intuition-based and consider exclusively node features or graph topology, not both. \\

\noindent
\textbf{Model level methods} \cite{yuan2020xgnn} provide general insights on model functioning. It supports only graph classification, requires an input candidate node set and is challenged by local methods also giving global explanations \cite{luo2020parameterized}. \\

\noindent
As we will show shortly, GraphSVX bridges the gap between these categories by learning a surrogate explanation model on a perturbed dataset that ultimately decomposes the explained prediction among the nodes and features of the graph, depending on their respective contribution. It also derives model-level insights by explaining subsets of nodes, while avoiding the respective limits of each category. \\

\noindent
\textbf{Desirable properties of explanations} have received subsequent attention from the social sciences and the machine learning communities, but are often overlooked when designing an explainer. From a theoretical perspective, good explanations are accurate, fidel (\textit{truthful}), and reflect the proportional importance of a feature on prediction (\textit{meaningful}) \cite{burkart2021survey, yuan2020explainability}. They also are stable and consistent (\textit{robust}), meaning with a low variance when changing to a similar model or a similar instance \cite{molnar2020interpretable}. Besides, they reflect the certainty of the model (\textit{decomposable}) and are as representative as possible of its (\textit{global}) functioning \cite{miller2019explanation}. 
Finally, since their ultimate goal is to help humans understand the model, explanations should be intuitive to comprehend (\textit{human-centric}). Many sociological and psychological studies emphasise key aspects: only a few motives (\textit{selective}) \cite{ustun2014methods}, comparable to other instances (\textit{contrastive}) \cite{lipton1990contrastive}, and interactive with the explainee (\textit{social}). We refer to Appendix \ref{A_prop} for more rigorous definitions and to see how GNN explainers satisfy them.  
\label{_Rel}

\section{Preliminary Concepts and Background}
\textbf{Notation.} We consider a graph $\mathcal{G}$ with $N$ nodes and $F$ features defined by $(\mathbf{X},\mathbf{A})$ where $\mathbf{X} \in \mathbb{R}^{N \times F}$ is the feature matrix and $\mathbf{A} \in \mathbb{R}^{N \times N}$ the adjacency matrix. $f(\mathbf{X}, \mathbf{A})$ denotes the prediction of the GNN model $f$, and $f_v(\mathbf{X},\mathbf{A})$ the score of the predicted class for node $v$. 
Let $\mathbf{X}_{*j} = (X_{1j},\ldots,X_{Nj})$ with feature values $\mathbf{x}_{*j}=(x_{1j},\ldots,x_{Nj})$ represent feature $j$'s value vector across all nodes. Similarly, $\mathbf{X}_{i} = \mathbf{X}_{i*} = (X_{i1},\ldots,X_{iF})$ stands for node $i$'s feature vector, with  $\mathbf{X}_{iS}=\{X_{ik} | k \in S\}$. $\mathbf{1}$ is the all-ones vector.

\subsection{Graph Neural Networks}
GNNs adopt a message passing mechanism \cite{hamilton2017inductive} where the update at each GNN layer $\ell$ involves three key calculations \cite{battaglia2018relational}:
(i) The propagation step. The model computes a message $m_{ij}^\ell = \textsc{Msg}(\mathbf{h}_i^{\ell-1},\mathbf{h}_j^{\ell-1},a_{ij})$ between every pair of nodes $(v_i, v_j)$, that is, a function $\textsc{MSG}$ of $v_i$'s and $v_j$'s representations $\mathbf{h}_i^{\ell-1}$ and $\mathbf{h}_j^{\ell-1}$ in the previous layer and of the relation $a_{ij}$ between the nodes. (ii) The aggregation step. For each node $v_i$, GNN calculates an aggregated message $M_i$ from $v_i$'s neighbourhood $\mathcal{N}_{v_i}$, whose definition vary across methods. $M_i^\ell = \textsc{Agg}(m_{ij}^\ell|v_j \in \mathcal{N}_{v_i})$. (iii) The update step. GNN non-linearly transforms both the aggregated message $M_i^\ell$ and $v_i$'s representation $\mathbf{h}_i^{\ell-1}$ from the previous layer, to obtain $v_i$'s representation $\mathbf{h}_i^\ell $ at layer $\ell$: $\mathbf{h}_i^\ell = \textsc{Upd}(M_i^\ell,\mathbf{h}_i^{\ell-1})$. The 
representation $\mathbf{z}_i = \mathbf{h}_i^L$ of the final GNN layer $L$ serves as final node embedding and is used for downstream machine learning tasks.

\subsection{The Shapley value}

The Shapley value is a method from Game Theory. It describes how to fairly distribute the total gains of a game to the players depending on their respective contribution, assuming they all collaborate. 
It is obtained by computing the average marginal contribution of each player when added to any possible coalition of players \cite{shapley1953value}. This method has been extended to explain machine learning model predictions on tabular data \cite{lipovetsky2001analysis, strumbelj2010efficient}, assuming that each feature of the explained instance ($\mathbf{x}$) is a player in a game where the prediction is the payout.

\par The characteristic function $\text{\textit{val}}: S \rightarrow \mathbb{R}$ captures the marginal contribution of the coalition $S \subseteq \{1,\ldots, F\}$  of features towards the prediction $f(\mathbf{x})$ with respect to the average prediction: $\text{\textit{val}}(S)= \mathbb{E}[f(\mathbf{X})|\mathbf{X}_S=\mathbf{x}_s] - \mathbb{E}[f(\mathbf{X})]$. We isolate the effect of a feature $j$ via $\text{\textit{val}}(S\cup \{j\}) - \text{\textit{val}}(S)$ and average it over all possible ordered coalitions $S$ to obtain its Shapley value as: 
\begin{align*}
    \phi _{j}(\text{\textit{val}}) = \sum _ {S \subseteq \{1,\ldots,F \} \setminus \{j  \} } 
    \dfrac{\vert S \vert ! ~ (F- \vert S \vert - 1)!}{F!} 
\big(\text{\textit{val}}(S \cup \{ j \}) - \text{\textit{val}}(S)\big).
\end{align*}
The notion of fairness is defined by four axioms (\textit{efficiency}, \textit{dummy}, \textit{symmetry}, \textit{additivity}), and the Shapley value is the unique solution satisfying them. In practice, the sum becomes impossible to compute because the number of possible coalitions ($2^{F-1}$) increases exponentially by adding more features. We thus approximate Shapley values using sampling \cite{vstrumbelj2014explaining, datta2016algorithmic, lundberg2017unified}.

\label{_PK}

\section{A Unified Framework for GNN Explainers}
\begin{figure*}[t]
    \centering
    \includegraphics[width= \textwidth]{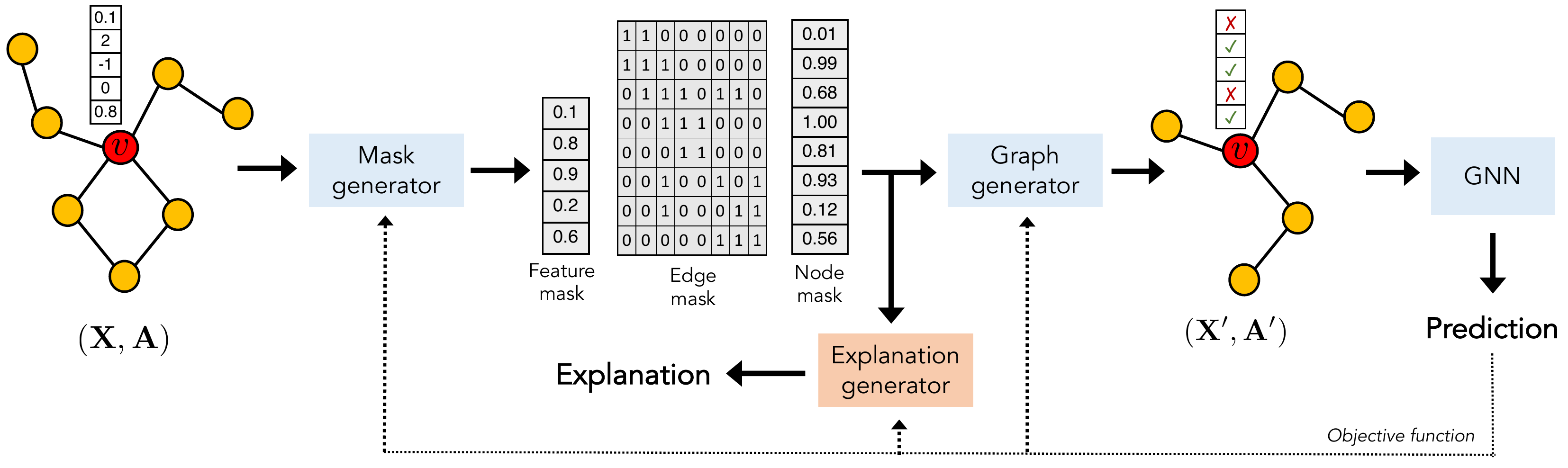}
    \caption{Overview of unified framework. All methods take as input a given graph $\mathcal{G}=(\mathbf{X},\mathbf{A})$, feed it to a mask generator (\textsc{Mask}) to create three masks over nodes, edges and features. These masks are then passed to a graph generator (\textsc{Gen}) that converts them to the original input space $(\mathbf{X}',\mathbf{A}')$ before feeding them to the original GNN model $f$. The resulting prediction $f(\mathbf{X}',\mathbf{A}')$ is used to improve the mask generator, the graph generator or the downstream explanation generator (\textsc{Expl}), which ultimately provides the desired explanation--using masks and $f(\mathbf{X}',\mathbf{A}')$. This passage through the framework is repeated many times to create a proper dataset $\mathcal{D}$ from which each generator block learns. Usually, only one is optimised with a carefully defined process involving the new and original GNN predictions.}
    \label{unif_pipeline}
\end{figure*}

As detailed in the previous section, existing interpretation methods for GNNs are categorised and often treated separately. In this paper, we approach the explanation problem from a new angle, proposing a unified view that regroups existing explainers under a single framework: GNNExplainer, PGExplainer, GraphLIME, PGM-Explainer, XGNN, and the proposed GraphSVX. The key differences across models lie in the definition and optimisation of the three main blocks of the pipeline, as shown in Fig. \ref{unif_pipeline}:
\begin{itemize}
    \item \textsc{Mask} generates discrete or continuous masks over features $\mathbf{M}_F \in \mathbb{R}^{F} $, nodes $\mathbf{M}_N \in \mathbb{R}^{N}$ and edges $\mathbf{M}_E \in \mathbb{R}^{N \times N}$, according to a specific strategy. 
    \item \textsc{Gen} outputs a new graph $\mathcal{G}'=(\mathbf{X}',\mathbf{A}')$ from the masks $(\mathbf{M}_E, \mathbf{M}_N, \mathbf{M}_F)$ and the original graph $\mathcal{G}=(\mathbf{X},\mathbf{A})$.
    \item \textsc{Expl} generates explanations, often offered as a vector or a graph, using a function $g$ whose definition vary across baselines.
\end{itemize}

In the following, we show how each baseline fits the pipeline. $\odot$ stands for the element wise multiplication operation, $\sigma$ the softmax function, $||$ the concatenation operation, and $\textbf{M}^{\text{ext}}$ describes the extended vector $\mathbf{M}$ with repeated entries, whose size makes the operation feasible. All three masks are not considered for a single method; some are ignored as they have no effect on final explanations. Indeed, one often studies node feature $\mathbf{M}_F$ or graph structure (via $\mathbf{M}_E$ or $\mathbf{M}_N$). \\

\noindent
\textbf{GNNExplainer}'s key component is the mask generator. It generates both $\mathbf{M}_F$ and $\mathbf{M}_E$, where $\mathbf{M}_E$ has continuous values and $\mathbf{M}_F$ discrete ones. They are both randomly initialised and jointly optimised via a mutual information loss function  $MI(Y, (\mathbf{M}_E,\mathbf{M}_F)) = H(Y) - H(Y|\mathbf{A}', \mathbf{X}')$, where $\textsc{Gen}$ gives $\mathbf{A}'= \mathbf{A} \odot \sigma(\mathbf{M}_E)$ and $\mathbf{X}' = \mathbf{X} \odot \mathbf{M}_F^{\text{ext}}$. $Y$ represents the class label and $H(\cdot)$ the entropy term. \textsc{Expl} simply returns the learned masks as explanations, via the identity function $g(\mathbf{M}_E, \mathbf{M}_F)= (\mathbf{M}_E, \mathbf{M}_F)$. \\

\noindent
\textbf{PGExplainer} is very similar to GNNExplainer. $\textsc{Mask}$ generates only an edge mask $\mathbf{M}_E$ using a multi-layer neural network $\textsc{MLP}_{\psi}$ and the learned matrix $\mathbf{Z}$ of node representations: $\mathbf{M}_E = \textsc{MLP}_{\psi}(\mathcal{G},\mathbf{Z})$. The new graph is constructed with $\textsc{Gen}(\mathbf{X},\mathbf{A},\mathbf{M}_E) =
(\mathbf{X}, \mathbf{A} \odot \rho(\mathbf{M}_E))$, where $\rho$ denotes a reparametrisation trick. The obtained prediction $f_v(\mathbf{X},\mathbf{A}')$ is also used to maximise mutual information with $f_v(\mathbf{X},\mathbf{A})$ and backpropagates the result to optimise $\textsc{Mask}$. As for GNNExplainer, $\textsc{Expl}$ provides $\mathbf{M}_E$ as explanations. \\

\noindent
\textbf{GraphLIME} is a surrogate method with a simple and not optimised mask generator. Although it measures feature importance, it creates a node mask $\mathbf{M}_N$ using the neighbourhood of $v$ (i.e., $\mathcal{N}_v$). The $k^{th}$ mask (or sample) is defined as $\mathbf{M}_{N,i}^k = 1 ~~ \text{ if } v_i = \mathcal{N}_v[k] \text{ and } 0$ otherwise. 
\textsc{Gen}$(\mathbf{X},\mathbf{A},\mathbf{M}_N) = (\mathbf{X},\mathbf{A})$, so in fact, it computes and stores the original model prediction. $\mathbf{X}$ and $f(\mathbf{X},\mathbf{A})$ are then combined with the mask $\mathbf{M}_N$ via simple dot products $\mathbf{M}_N^\top \cdot \mathbf{X}$ and $\mathbf{M}_N^\top \cdot f(\mathbf{X},\mathbf{A})$ respectively, to isolate the original feature vector and prediction of the $k^{th}$ neighbour of $v$. These two elements are treated as input and target of an HSIC Lasso model $g$, trained with an adapted loss function. The learned coefficients constitute importance measures that are given as explanations by $\textsc{Expl}$. \\

\noindent
\textbf{PGM-Explainer} builds a probabilistic graphical model on a local dataset that consists of random node masks $\mathbf{M}_N \in \{0,1\}^N$. The associated prediction $f_v(\mathbf{X}',\mathbf{A}')$ is obtained by posing $\mathbf{A'} = \mathbf{A}$ and $\mathbf{X}'= \mathbf{M}_N^{\text{ext}} \odot \mathbf{X} + (\mathbf{1} - \mathbf{M}_N^{\text{ext}} \odot \boldsymbol{\mu}^{\text{ext}})$, with $\boldsymbol{\mu} = (E[\mathbf{X}_{*1}], \ldots, E[\mathbf{X}_{*F}])^\top$. This means that each excluded node feature ($\mathbf{M}_{N,j}=0$) is set to its mean value across all nodes. This dataset is fed sequentially to the main component $\textsc{Expl}$, which learns and outputs a Bayesian Network $g$ with input $\mathbf{M}_N$ (made sparser by looking at the Markov-blanket of $v$), \textit{BIC score} loss function, and target $I(f_v(\mathbf{X}',\mathbf{A}'))$, where $I(\cdot)$ is a specific function that quantifies the difference in prediction between original and new prediction. \\

\noindent
\textbf{XGNN} is a model-level approach that trains an iterative graph generator (add one edge at a time) via reinforcement learning. This causes two key differences with previous approaches: (1) the input graph at iteration $t$ ($\mathcal{G}_t$) is obtained from the previous iteration and is initialised as the empty graph;  (2) we also pass a candidate node set $\mathcal{C}$, such that $\mathbf{X}_\mathcal{C}$ contains the feature vector of all distinct nodes across all graphs in dataset. $\textsc{Mask}$ generates an edge mask $\mathbf{M}_E=\mathbf{A}_t$ and a node mask $\mathbf{M}_{N_t} \in \{0,1\}^{|\mathcal{C}|}$ specifying the latest node added to $\mathcal{G}_t$, if any. $\textsc{Gen}$ produces a new graph $\mathcal{G}_{t+1}$ from $\mathcal{G}_{t}$ by predicting a new edge, possibly creating a new node from $\mathcal{C}$. This is achieved by applying a GCN and two MLP networks. 
Then, $\mathcal{G}_{t+1}$ is fed to the explained GNN. The resulting prediction is used to update model parameters via a policy gradient loss function. $\textsc{Expl}$ stores nonzero $\mathbf{M}_{N_t}$ at each time step and provides $g(\{\mathbf{M}_{N_t}\}_t, \mathbf{X}_C, \mathbf{M}_E)) = (||_t \mathbf{M}_{N_t} \cdot \mathbf{X}_C , \mathbf{M}_E)$ as explanation --- i.e. the graph generated at the final iteration, written $\mathcal{G}_{T}$. \\

\noindent
\textbf{GraphSVX}. 
As we will see in the next section, the proposed GraphSVX model carefully exploits the potential of this framework through a better design and combination of complex mask, graph and explanation generators--in the perspective of improving performance and embedding desirable properties in explanations.

\label{_Uni}

\section{Proposed Method}
GraphSVX is a post hoc model-agnostic explanation method specifically designed for GNNs, that jointly computes graph structure and node feature explanations for a single instance. More precisely, GraphSVX constructs a perturbed dataset made of binary masks for nodes and features $(\mathbf{M}_N, \mathbf{M}_F)$, and computes their marginal contribution $f(\mathbf{X'},\mathbf{A'})$ towards the prediction using a graph generator $\textsc{Gen}(\mathbf{X},\mathbf{A}, \mathbf{M}_F,\mathbf{M}_N) = (\mathbf{X'},\mathbf{A'})$. It then learns a carefully defined explanation model on the dataset $( \mathbf{M}_N || \mathbf{M}_F,f(\mathbf{X'},\mathbf{A'}) )$ and provides it as explanation. Ultimately, it produces a unique deterministic explanation that decomposes the original prediction and has a real signification (Shapley values) as well as other desirable properties evoked in Sec. \ref{_Rel}. Without loss of generality, we consider a node classification task for the presentation of the method.

\subsection{Mask and graph generators}
\label{MGG}

First of all, we create an efficient mask generator algorithm that constructs discrete feature and node masks, respectively denoted by $\mathbf{M}_F \in \{0,1\}^F$ and $\mathbf{M}_N \in \{0,1\}^N$. 
Intuitively, for the explained instance $v$, we aim at studying the joint influence of a subset of features and neighbours of $v$ towards the associated prediction $f_v(\mathbf{X},\mathbf{A})$. The mask generator helps us determine the subset being studied. Associating $1$ with a variable (node or feature) means that it is considered, $0$ that it is discarded. 
For now, we let $\textsc{Mask}$ randomly sample from all possible ($2^{F+N-1}$) pairs of masks $\mathbf{M}_F$ and $\mathbf{M}_N$, meaning all possible coalitions $S$ of  features and nodes ($v$ is not considered in explanations). Let $\mathbf{z}$ be the random variable accounting for selected variables, $\mathbf{z}=(\mathbf{M}_F \| \mathbf{M}_N)$. This is a simplified version of the true mask generator, which we will come back to later, in Sec. \ref{efficient_approx}. 

\par We now would like to estimate the joint effect of this group of variables towards the original prediction. We thus isolate the effect of selected variables marginalised over excluded ones, and observe the change in prediction. We define $\textsc{Gen}: (\mathbf{X},\mathbf{A}, \mathbf{M}_F,\mathbf{M}_N) \rightarrow (\mathbf{X}',\mathbf{A'})$, which converts the obtained masks to the original input space, in this perspective. 
Due to the message passing scheme of GNNs, studying jointly node and features' influence is tricky. Unlike GNNExplainer, we avoid any overlapping effect by considering feature values of $v$ (instead of the whole subgraph around $v$) and all nodes except $v$. 
Several options are possible to cancel out a node's influence on the prediction, such as replacing its feature vector by random or expected values. Here, we decide to isolate the node in the graph, which totally removes its effect on the prediction. Similarly, to neutralise the effect of a feature, as GNNs do not handle missing values, we set it to the dataset expected value. Formally, it translates into:
\begin{align}
    \mathbf{X}'&=\mathbf{X} \text{ ~with~ } \mathbf{X}_v'= \mathbf{M}_F \odot \mathbf{X}_v + (\mathbf{1}-\mathbf{M}_F) \odot \boldsymbol{\mu} \label{eq:data} \\
    \mathbf{A}' &= (\mathbf{M}_N^{\text{ext} \top} \cdot \mathbf{A} \cdot  \mathbf{M}_N^{\text{ext}}) \odot I(\mathbf{A}), \label{eq:data2}
\end{align}
where $\boldsymbol{\mu} = (\mathbb{E}[\mathbf{X}_{*1}], \ldots, \mathbb{E}[\mathbf{X}_{*F}])^\top$  and $I(\cdot)$ captures the indirect effect of $k$-hop neighbours of $v$ ($k>1$), which is often underestimated. Indeed, if a $3$-hop neighbour $w$ is considered alone in a coalition, it becomes disconnected from $v$ in the new graph $\mathcal{G'}$. This prevents us from capturing its indirect impact on the prediction since it does not pass information to $v$ anymore. To remedy this problem, 
we select one shortest path $\mathcal{P}$ connecting $w$ to $v$ via $\textit{Dijkstra's}$ algorithm, and include $\mathcal{P}$ back in the new graph. To keep the influence of the new nodes (in $\mathcal{P}\setminus \{w,v\}$) switched off, we set their feature vector to mean values obtained by Monte Carlo sampling. The pseudocode is in the Supplementary material.

\par To finalize the perturbation dataset, we pass $\mathbf{z'}=(\mathbf{X}',\mathbf{A'})$ to the GNN model $f$ and store each sample $(\mathbf{z}, f(\mathbf{z'}))$ in a dataset $\mathcal{D}$. $\mathcal{D}$ associates with a subset of nodes and features of $v$ their estimated influence on the original prediction.

\subsection{Explanation generator}

In this section, we build a surrogate model $g$ on the dataset $\mathcal{D} = \{(\mathbf{z}, f(\mathbf{z'}))\}$ and provide it as explanation. More rigorously, an explanation $\mathbf{\phi}$ of $f$ is normally drawn from a set of possible explanations, called interpretable domain $\Omega$. It is the solution of the following optimisation process: $\boldsymbol{\phi} = \arg \min_{g \in \Omega} \mathcal{L}_{f}(g)$, where the loss function attributes a score to each explanation. The choice of $\Omega$ has a large impact on the type and quality of the obtained explanation. In this paper, we choose broadly $\Omega$ to be the set of interpretable models, and more precisely the set of Weighted Linear Regression (WLR).

\par In short, we intend our model to learn to calculate the individual effect of each variable towards the original prediction from the joint effect $f(\mathbf{z'})$, using many different coalitions $S$ of nodes and features. This is made possible by the definition of the input dataset $\mathcal{D}$ and is enforced by a cross entropy loss function:
\begin{align}
\begin{split}
    \mathcal{L}_{f, \boldsymbol{\pi}}(g) &= \sum_{\mathbf{z}} \big( g(\mathbf{z})-f(\mathbf{z'}) \big)^2 \boldsymbol{\pi}_{\mathbf{z}}, \\
    \text{where~~~} \quad \boldsymbol{\pi}_{\mathbf{z}} &= \dfrac{F+N-1}{(F+N) \cdot |\mathbf{z}|} \cdot \binom{F+N-1}{|\mathbf{z}|}^{-1}.
\end{split}
\label{eq:objective}
\end{align}
$\boldsymbol{\pi}$ is a kernel weight that attributes a high weight to samples $\mathbf{z}$ with small or large dimension, or in different terms, groups of features and nodes with few or many elements---since it is easier to capture individual effects from the combined effect in these cases.

\par In the end, we provide the learned parameters of $g$ as explanation. Each coefficient corresponds to a node of the graph or a feature of $v$ and represents its estimated influence on the prediction $f_v(\mathbf{X},\mathbf{A})$. In fact, it approximates the extension of the Shapley value to graphs, as shown in next paragraph.

\subsection{Decomposition model} 

We first justify why it is relevant to extend the Shapley value to graphs. Looking back at the original theory, each player contributing to the total gain is allocated a proportion of that gain depending on its fair contribution. Since a GNN model prediction is fully determined by node feature information ($\mathbf{X}$) and graph structural information ($\mathbf{A}$), both edges/nodes and node features are players that should be considered in explanations. In practice, we extend to graphs the four Axioms defining fairness (please see the Supplementary Material), and redefine how is captured the influence of players (features and nodes) towards the prediction as $\text{\textit{val}}(S)= \mathbb{E}_{\mathbf{X}_{v}}[f_v(\mathbf{X},\mathbf{A}_S)|\mathbf{X}_{vS}= \mathbf{x}_{vS}] - \mathbb{E}[f_v(\mathbf{X},\mathbf{A})]$. $\mathbf{A}_S$ is the adjacency matrix where all nodes in $\overline{S}$ (not in $S$) have been isolated.


\par Assuming model linearity and feature independence, we show that GraphSVX, in fact, captures via $f(\mathbf{z'})$ the marginal contribution of each coalition $S$ towards the prediction: 
\begin{align*}
    \mathbb{E}_{\mathbf{X}_{v}}[f_v(\mathbf{X},\mathbf{A}_S)|\mathbf{X}_{vS}]  &= \mathbb{E}_{\mathbf{X}_{v\overline{S}}|\mathbf{X}_{vS}}[f_v(\mathbf{X},\mathbf{A}_S)]  \\
    &\approx \mathbb{E}_{\mathbf{X}_{v\overline{S}}} [f_v(\mathbf{X},\mathbf{A}_S)] & \text{by independence} \\
    &\approx f_v(\mathbb{E}_{\mathbf{X}_{v\overline{S}}}[\mathbf{X}],  \mathbf{A}_S ) & \text{by linearity} \\
    &= f_v(\mathbf{X}', \mathbf{A'}), 
\end{align*}
where $\mathbf{A'} = \mathbf{A}_S$ and  $\mathbf{X}'_{ij} = 
\begin{cases}
    \mathbb{E}[\mathbf{X}_{*j}] \mbox{ if $i=v$ and $j \in \overline{S}$ }\\
    \mathbf{X}_{ij} \mbox{ otherwise.}
\end{cases} $

\noindent Using the above, we prove that GraphSVX calculates the Shapley values on graph data. This builds on the fact that Shapley values can be expressed as an additive feature attribution model, as shown by \cite{lundberg2017unified} in the case of tabular data. 
 
In this perspective, we set $\boldsymbol{\pi}_v$ such that $\boldsymbol{\pi}_v(\mathbf{z}) \rightarrow \infty$ when $|\mathbf{z}| \in \{0,F+N\}$ to enforce the \textit{efficiency} axiom: $g(\mathbf{1}) = f_v(\mathbf{X},\mathbf{A})= \mathbb{E}[f_v(\mathbf{X},\mathbf{A})] + \sum_{i=1}^{F+N} \phi_i$. This holds due to the specific definition of \textsc{Gen} and $g$ (i.e., \textsc{Expl}), where $g(\mathbf{1}) = f_v(\mathbf{X},\mathbf{A})$ and the constant $\phi_0$, also called base value, equals $\mathbb{E}_{\mathbf{X}_{v}}[f_v(\mathbf{X},\mathbf{A}_v)] \approx \mathbb{E}[f_v(\mathbf{X},\mathbf{A})]$, so the mean model prediction. $\mathbf{A}_v$ refers to $\mathbf{A}_{\emptyset}$, where $v$ is isolated.

\vspace{.1cm}

\begin{theorem}
With the above specifications and assumptions, the solution to $\min_{g \in \Omega} \mathcal{L}_{f, \boldsymbol{\pi}}(g)$ under Eq. \eqref{eq:objective} is a unique explanation model $g$ whose parameters compute the extension of the Shapley values to graphs.
\label{theorem}
\end{theorem}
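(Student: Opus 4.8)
The plan is to reduce the statement to the known characterization of Shapley values as the unique solution of a constrained weighted least-squares problem (the Kernel SHAP identity of \cite{lundberg2017unified}, going back to Charnes et al.), now applied to the enlarged player set consisting of the $F$ features of $v$ together with the nodes of $\mathcal{G}$ other than $v$. First I would make the reduction of the decomposition-model paragraph precise: under the model-linearity and feature-independence assumptions, $\textsc{Gen}$ realizes, for each mask $\mathbf{z}$ encoding a coalition $S$, the identity $f_v(\mathbf{X}',\mathbf{A}') = \mathbb{E}_{\mathbf{X}_v}[f_v(\mathbf{X},\mathbf{A}_S)\mid \mathbf{X}_{vS}=\mathbf{x}_{vS}] = \text{\textit{val}}(S) + \phi_0$, where $\phi_0 = \mathbb{E}_{\mathbf{X}_v}[f_v(\mathbf{X},\mathbf{A}_v)] \approx \mathbb{E}[f_v(\mathbf{X},\mathbf{A})]$ is the base value attained at $\mathbf{z}=\mathbf{0}$. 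Here I would check in particular that isolating $v$ and adding back the shortest-path correction via $I(\mathbf{A})$ make $\mathbf{A}' = \mathbf{A}_S$ hold as intended, so that the samples of $\mathcal{D}$ are precisely $\{(\mathbf{z},\,\text{\textit{val}}(S_{\mathbf{z}})+\phi_0)\}$.

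Second, since $\Omega$ is the class of weighted linear regressions, every candidate has the form $g(\mathbf{z}) = \phi_0 + \sum_{i} \phi_i z_i$; the kernel $\boldsymbol{\pi}_{\mathbf{z}}$ of Eq. \eqref{eq:objective} diverges at $|\mathbf{z}| \in \{0, F+N\}$, which pins the intercept to $\phi_0 = \mathbb{E}[f_v(\mathbf{X},\mathbf{A})]$ and forces the efficiency constraint $\sum_i \phi_i = f_v(\mathbf{X},\mathbf{A}) - \phi_0$ (i.e. $g(\mathbf{1}) = f_v(\mathbf{X},\mathbf{A})$). So $\min_{g\in\Omega}\mathcal{L}_{f,\boldsymbol{\pi}}(g)$ is exactly a strictly convex quadratic program in $(\phi_i)_i$ subject to one linear equality constraint. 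Strict convexity (positive weights and a full-rank design matrix) gives a unique minimizer, which settles the uniqueness part of the statement. To identify this minimizer with the graph Shapley values, the most transparent route is to verify directly from the Karush--Kuhn--Tucker/normal equations that it satisfies the four fairness axioms extended to graphs (Supplementary Material): efficiency holds by the imposed constraint; the dummy axiom and symmetry follow because $\boldsymbol{\pi}_{\mathbf{z}}$ depends on $\mathbf{z}$ only through $|\mathbf{z}|$ and is invariant under permuting players, combined with the structure of $\text{\textit{val}}$; additivity/linearity holds because the solution map of a linear least-squares problem is linear in the targets $\text{\textit{val}}(S_{\mathbf{z}})$. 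Invoking the uniqueness of the Shapley value as the only map obeying those axioms then identifies the learned parameters with $\phi_i(\text{\textit{val}})$. Alternatively, one can skip the axioms and match closed forms: since $\boldsymbol{\pi}$ is constant on each shell $\{|\mathbf{z}| = k\}$, the normal equations collapse to a low-dimensional system indexed by $k$ that can be solved explicitly and compared term by term with $\phi_i(\text{\textit{val}}) = \sum_{S \not\ni i} \frac{|S|!\,(F+N-|S|-1)!}{(F+N)!}\big(\text{\textit{val}}(S\cup\{i\}) - \text{\textit{val}}(S)\big)$.

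I expect the crux, and the main obstacle, to be precisely this identification step: showing that the kernel in Eq. \eqref{eq:objective} is the one (up to an irrelevant overall scale) for which the constrained weighted least-squares minimizer equals the Shapley vector rather than merely some other linear attribution. Carrying out the Lagrangian computation cleanly requires exploiting the shell structure of $\boldsymbol{\pi}$ and the combinatorics of how many coalitions of a given size contain a fixed player; this is where all the arithmetic lives. A secondary point needing care is that the first-step reduction is the only place the linearity and independence assumptions enter, so that whenever they fail $g$ returns an approximation of the graph Shapley value rather than the exact quantity---consistent with the ``with the above assumptions'' qualifier in the statement.
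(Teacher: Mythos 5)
Your proposal is correct in substance, and your ``alternative'' route is in fact the one the paper takes: the appendix proof writes the WLS solution as $\boldsymbol{\phi}=(\mathbf{Z}^\top\mathbf{W}\mathbf{Z})^{-1}\mathbf{Z}^\top\mathbf{W}\mathbf{y}$, exploits the shell structure of the kernel to show $\mathbf{Z}^\top\mathbf{W}\mathbf{Z}=\frac{M-1}{M}\mathbf{I}+c\mathbf{J}$ (with $c\to\infty$ encoding the efficiency constraint), inverts this explicitly, and then matches the resulting coefficient on each $y_i$ term by term with the Shapley weights $\frac{(M-s_i-1)!\,s_i!}{M!}$, finishing exactly as you do by invoking linearity and independence to identify $f_v(S\cup\{j\})-f_v(S)$ with the difference of conditional expectations defining the graph characteristic function. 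Your correctly locate the crux in that combinatorial identification, and your first-step reduction (that $\textsc{Gen}$ realizes $\text{\textit{val}}(S)+\phi_0$, with the shortest-path correction ensuring $\mathbf{A}'=\mathbf{A}_S$) is the same as the paper's decomposition-model argument. Your \emph{primary} route --- verifying efficiency, symmetry, dummy, and additivity directly from the normal equations and then invoking the axiomatic uniqueness of the Shapley value --- is a genuinely different and legitimate path that the paper does not take; it buys a cleaner conceptual structure (no need to reproduce the Shapley closed form), but be aware that the dummy axiom is not free from ``the kernel depends only on $|\mathbf{z}|$'': checking that a null player receives coefficient zero under the constrained WLS requires essentially the same shell-by-shell combinatorics the paper performs, so the arithmetic is relocated rather than avoided. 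One caveat worth making explicit either way: the exact identification needs the design to contain all $2^{M}$ coalitions (the paper states this at the end of its proof); with the subsampled masks actually used, $g$ only approximates the graph Shapley values.
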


\begin{proof}
Please see Appendix \ref{A_proof}.  
\end{proof}

\subsection{Efficient approximation specific to GNNs}
\label{efficient_approx}

Similarly to the euclidean case, the exact computation of the Shapley values becomes intractable due to the number of possible coalitions required. Especially that we consider jointly features and nodes, which augments exponentially the complexity of the problem. To remedy this, we derive an efficient approximation via a smart mask generator.

\par Firstly, we reduce the number of nodes and features initially considered to $D\leq N$ and $B \leq F$ respectively, without impacting performance. Indeed, for a GNN model with $k$ layers, only $k$-hop neighbours of $v$ can influence the prediction for $v$, and thus receive a non-zero Shapley value. All others are allocated a null importance according to the \textit{dummy} axiom\footnote{Axiom: If $\forall S \in \mathcal{P}(\{1,\ldots,p\}) \text{ and } j \notin S$, $\text{\textit{val}}(S\cup \{j\}) = \text{\textit{val}}(S)$, $\text{ then } \phi_j(\text{\textit{val}}) = 0$.} and can therefore be discarded. Similarly, any feature $j$ of $v$ whose value is comprised in the confidence interval $I_j  = [\mu_j - \lambda \cdot \sigma_j, \mu_j +  \lambda \cdot \sigma_j]$ around the mean value $\mu_j$ can be discarded, where $\sigma_j$ is the corresponding standard deviation and $\lambda$ a constant.

\par The complexity is now $\mathcal{O}(2^{B+D})$ and we further drive it down to $\mathcal{O}(2^B + 2^D)$ by sampling separately masks of nodes and features, while still considering them jointly in $g$. In other words, instead of studying the influence of possible combinations of nodes and features, we consider all combinations of features with no nodes selected, and all combinations of nodes with all features included: $(2^B + 2^D)$. 
We observe empirically that it achieves identical explanations with fewer samples, while it seems to be more intuitive to capture the effect of nodes and features on prediction (expressed by Axiom \ref{axiom:rel-eff}).

\begin{axiom}[Relative efficiency] Node contribution to predictions can be separated from feature contribution, and their sum decomposes the prediction with respect to the average one:
    $\begin{cases}
    \sum_{j=1}^{B} \phi_j = f_v(\mathbf{X}, \mathbf{A}_v) - \mathbb{E}[f_v(\mathbf{X},\mathbf{A})] \\
    \sum_{i=1}^{D} \phi_{B+i} = f_v(\mathbf{X},\mathbf{A}) - f_v(\mathbf{X},\mathbf{A}_v). \\
    \end{cases}$
\label{axiom:rel-eff}    
\end{axiom}

Lastly, we approximate explanations using $P \ll 2^B + 2^D$ samples, where $P$ is sufficient to obtain a good approximation. We reduce $P$ by greatly improving \textsc{Mask}, as evoked in Sec. \ref{MGG}. Assuming we have a budget of $P$ samples, we develop a smart space allocation algorithm to draw in priority coalitions of order $k$, where $k$ starts at $0$ and is incremented when all coalitions of the order are sampled. This means that we sample in priority coalitions with high weight, so with nearly all or very few players. If they cannot all be chosen (for current $k$) due to space constraints, we proceed to a smart sampling that favours unseen players. The pseudocode and an evaluation of its efficiency lie in Appendix \ref{A_GSVX} and \ref{A_eval} respectively.

\subsection{Desirable properties of explanations}

In the end, GraphSVX generates fairly distributed explanations $\sum_j \phi_j = f_v(\mathbf{X},\mathbf{A})$, where each $\phi_j$ approximates the average marginal contribution of a node or feature $j$ towards the explained GNN prediction (with respect to the average prediction $\phi_0$). By definition, the resulting explanation is unique, consistent, and stable. It is also truthful and robust to noise, as shown in Sec. \ref{_Eval}. The last focus of this paper is to make them more selective, global, contrastive and social; as we aim to design an explainer with desirable properties. A few aspects are detailed here. \\

\noindent
\textbf{Contrastive}. Explanations are contrastive already as they yield the contribution of a variable with respect to the average prediction $\phi_0 = \mathbb{E}[f(\mathbf{X},\mathbf{A})]$.
To go futher and explain an instance with respect to another one, we could substitute $\mathbf{X}_v$ in Eq. \eqref{eq:data} by $\mathbf{X}_v'= \mathbf{M}_F \odot \mathbf{X}_v + (\mathbf{1}-\mathbf{M}_F) \odot \boldsymbol{\xi}$, with $\boldsymbol{\xi}$ being the feature vector of a specific node $w$, or of a fictive representative instance from class $C$. \\

\noindent
\textbf{Global}. We derive explanations for a subset $U$ of nodes instead of a single node $v$, following the same pipeline. The neighbourhood changes to $\bigcup_i^U \mathcal{N}_i$, Eq. \eqref{eq:data} now updates $\mathbf{X}_U$ instead of $\mathbf{X}_v$ and $f(\mathbf{z'})$ is calculated as the average prediction score for nodes in $U$. Also, towards a more global understanding, we can output the global importance of each feature $j$ on $v$'s prediction by enforcing in Eq. \eqref{eq:data} $\mathbf{X}_{\mathcal{N}_v \cup \{v\},j}$ to a mean value obtained by Monte Carlo sampling on the dataset, when $\mathbf{z}_j=0$. This holds when we discard node importance, otherwise the overlapping effects between nodes and features render the process obsolete. \\



\noindent
\textbf{Graph classification}. Until now, we had focused on node classification but the exact same principle applies for graph classification. We simply look at $f(\mathbf{X},\mathbf{A}) \in \mathbb{R}$ instead of $f_v(\mathbf{X},\mathbf{A})$, derive explanations for all nodes or all features (not both) by considering features across the whole dataset instead of features of $v$, like our global extension. \\

\label{_Method}

\section{Experimental Evaluation}
In this section, we conduct several experiments designed to determine the quality of our explanation method, using synthetic and real world datasets, on both node and graph classification tasks. We first study the effectiveness of GraphSVX in presence of ground truth explanations.
We then show how our explainer generalises to more complex real world datasets with no ground truth, by testing GraphSVX's ability to filter noisy features and noisy nodes from explanations.
Detailed dataset statistics, hyper-parameter tuning, properties' check and further experimental results including ablation study, are given in Appendix \ref{A_eval}.

\subsection{Synthetic and real datasets with ground truth}

\textbf{Synthetic node classification task}. We follow the same setting as \cite{luo2020parameterized} and \cite{ying2019gnnexplainer}, where four kinds of datasets are constructed. Each input graph is a combination of a base graph together with a set of motifs, which both differ across datasets. The label of each node is determined based on its belonging and role in the motif. As a consequence, the explanation for a node in a motif should be the nodes in the same motif, which creates ground truth explanation. This ground truth can be used to measure the performance of an explainer via an accuracy metric. \\

\noindent
\textbf{Synthetic and real-world graph classification task.}
With a similar evaluation perspective, we measure the effectiveness of our explainer on graph classification, also using ground truth. We use a synthetic dataset \textsl{BA-2motifs} that resembles the previous ones, and a real life dataset called \textsl{MUTAG}. It consists  of $4,337$ molecule graphs, each assigned to one of 2 classes based on its mutagenic  effect~\cite{riesen2008iam}. As discussed in~\cite{debnath1991structure}, carbon rings with  groups $NH_2$ or $NO_2$ are known to be mutagenic, and could therefore be used as ground truth. \\

\noindent
\textbf{Baselines}. We compare the performance of GraphSVX to the main explanation baselines that incorporate graph structure in explanations, namely GNNExplainer, PGExplainer and PGM-Explainer. GraphLIME and XGNN are not applicable here, since they do not provide graph structure explanations for such tasks. \\

\noindent
\textbf{Experimental setup and metrics}. We train the same GNN model -- 3 graph convolution blocks with $20$ hidden units, (maxpooling) and a fully connected classification layer -- on every dataset during $1,000$ epochs, with relu activation, Adam optimizer and initial learning rate $0.001$. The performance is measured with an accuracy metric (node or edge accuracy depending on the nature of explanations) on top-$k$ explanations, where $k$ is equal to the ground truth dimension. More precisely, we formalise the evaluation as a binary classification of nodes (or edges) where nodes (or edges) inside motifs are positive, and the rest negative. \\

\begin{table*}[t]
\begin{scriptsize}
     \begin{center}
     \begin{adjustbox}{max width=\textwidth}
     \begin{tabular}{ >{\centering\arraybackslash}m{2.3cm}  >{\centering\arraybackslash}m{1.5cm}  >{\centering\arraybackslash}m{1.9cm}  >{\centering\arraybackslash}m{1.7cm} >{\centering\arraybackslash}m{1.5cm}  >{\centering\arraybackslash}m{1.5cm} >{\centering\arraybackslash}m{1.5cm} }
    & \multicolumn{4}{c}{\scriptsize{\textbf{Node Classification}}} & \multicolumn{2}{c}{\scriptsize{\textbf{Graph Classification}}}\\
    \cmidrule(l){2-5} \cmidrule(l){6-7} 
      & \textsl{BA-Shapes} & \scriptsize{\textsl{BA-Community}} & \textsl{Tree-Cycles} & \textsl{Tree-Grid} & \textsl{BA-2motifs}    & \vspace{.2cm}\textsl{MUTAG}   \vspace{0.5em} \\ 
    Base &
     \includegraphics[width=0.3in, height=0.3in]{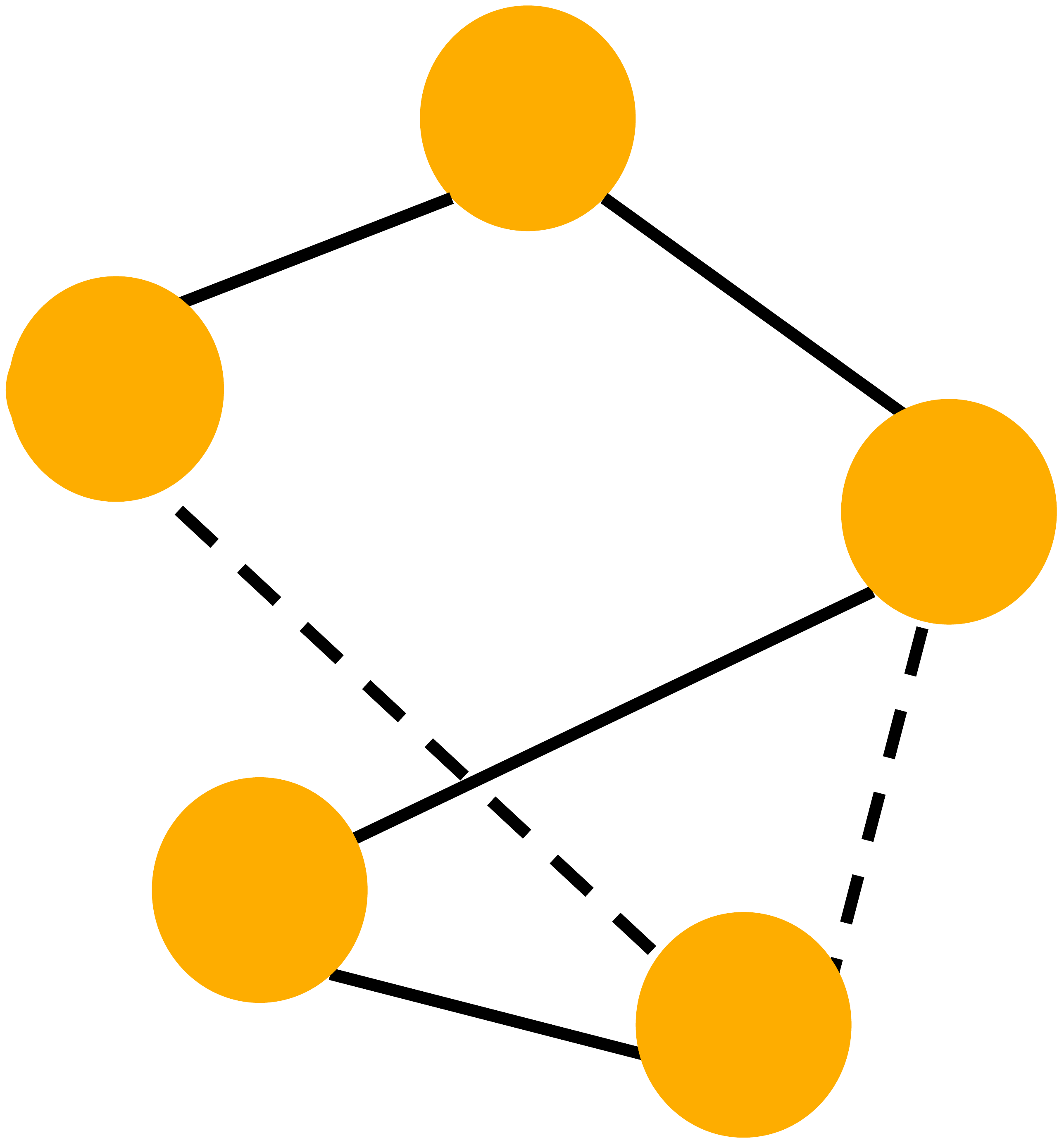}&
     \multirow{2}{*}[2.8ex]{\includegraphics[width=0.9in, height=0.7in]{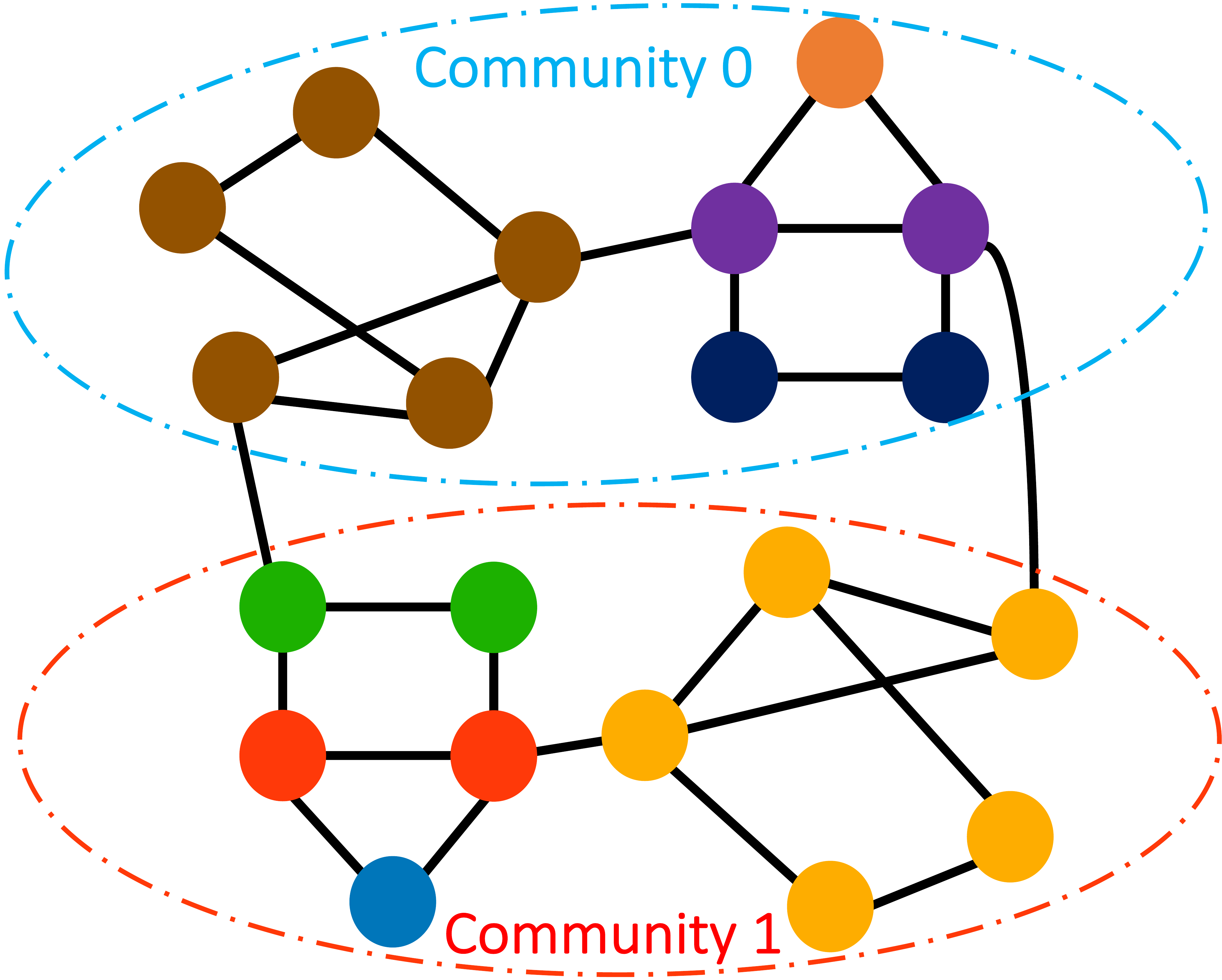}}&
     \includegraphics[width=0.35in, height=0.3in]{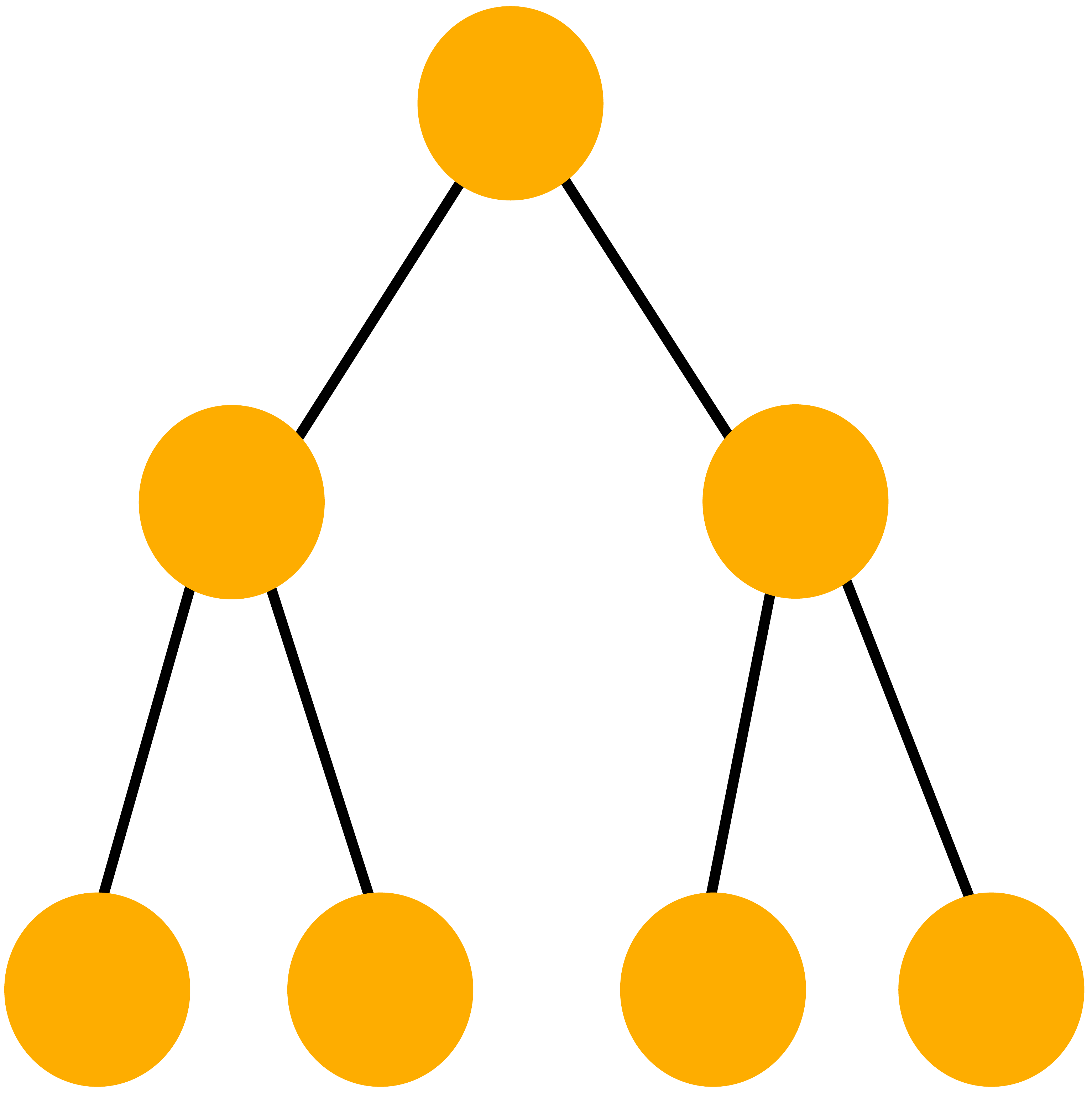}&
     \includegraphics[width=0.35in, height=0.3in]{figures/motifs/Tree.pdf}&
    \includegraphics[width=0.27in, height=0.3in]{figures/motifs/BA.pdf} &
    \includegraphics[width=0.28in, height=0.3in]{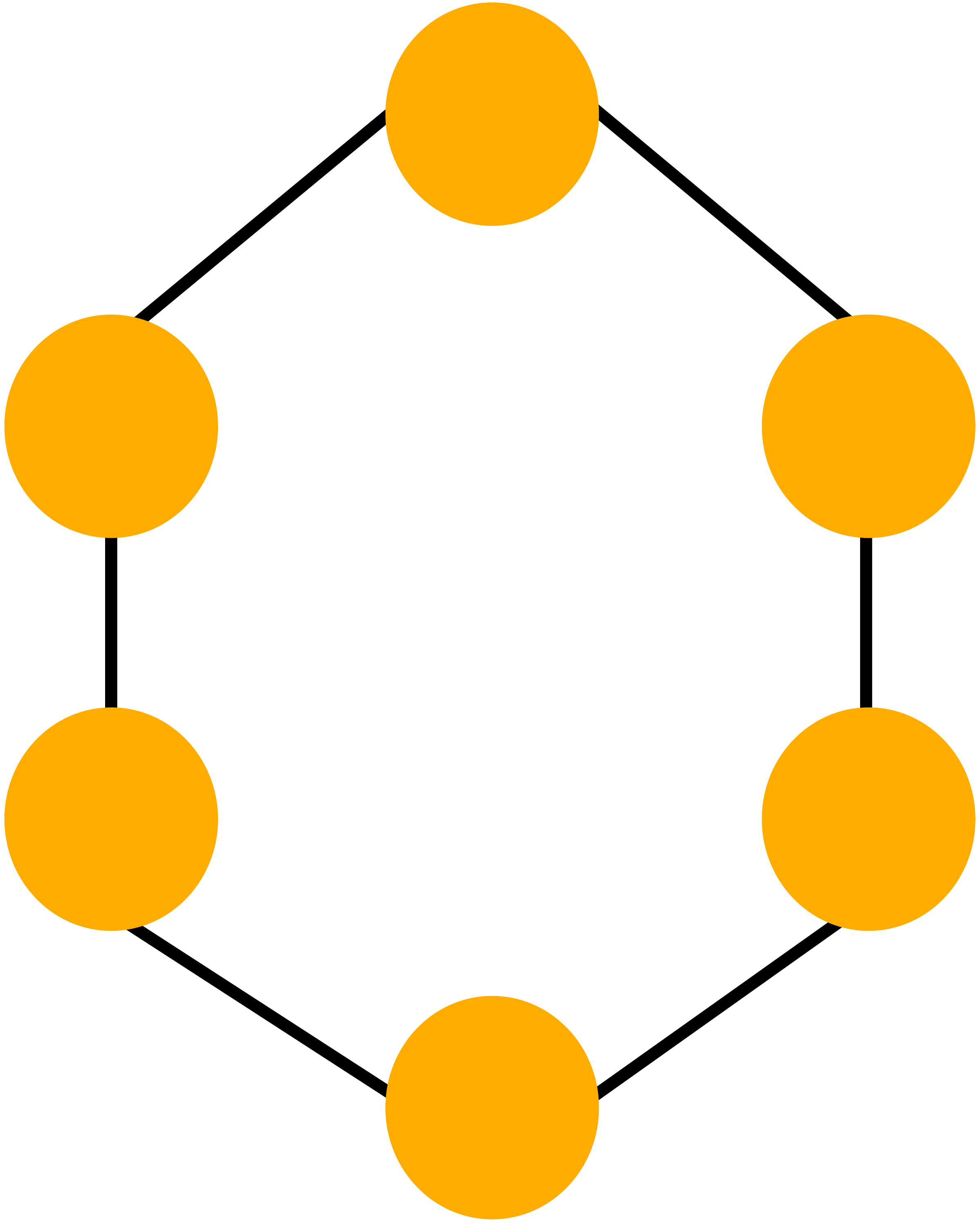} \vspace{0.5em} \\
     
    Motifs &
     \includegraphics[width=0.3in, height=0.3in]{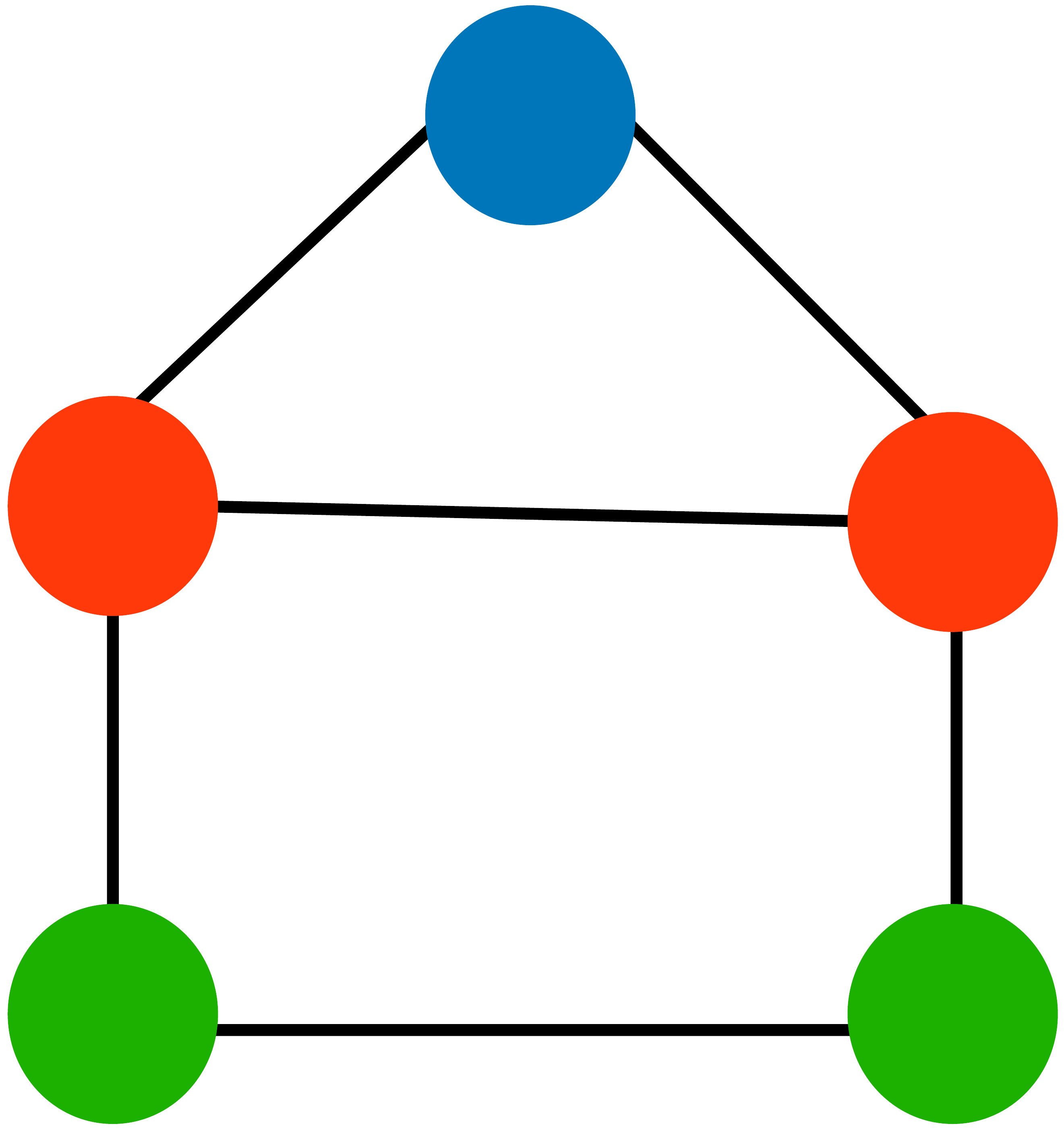}&&
     \includegraphics[width=0.27in, height=0.3in]{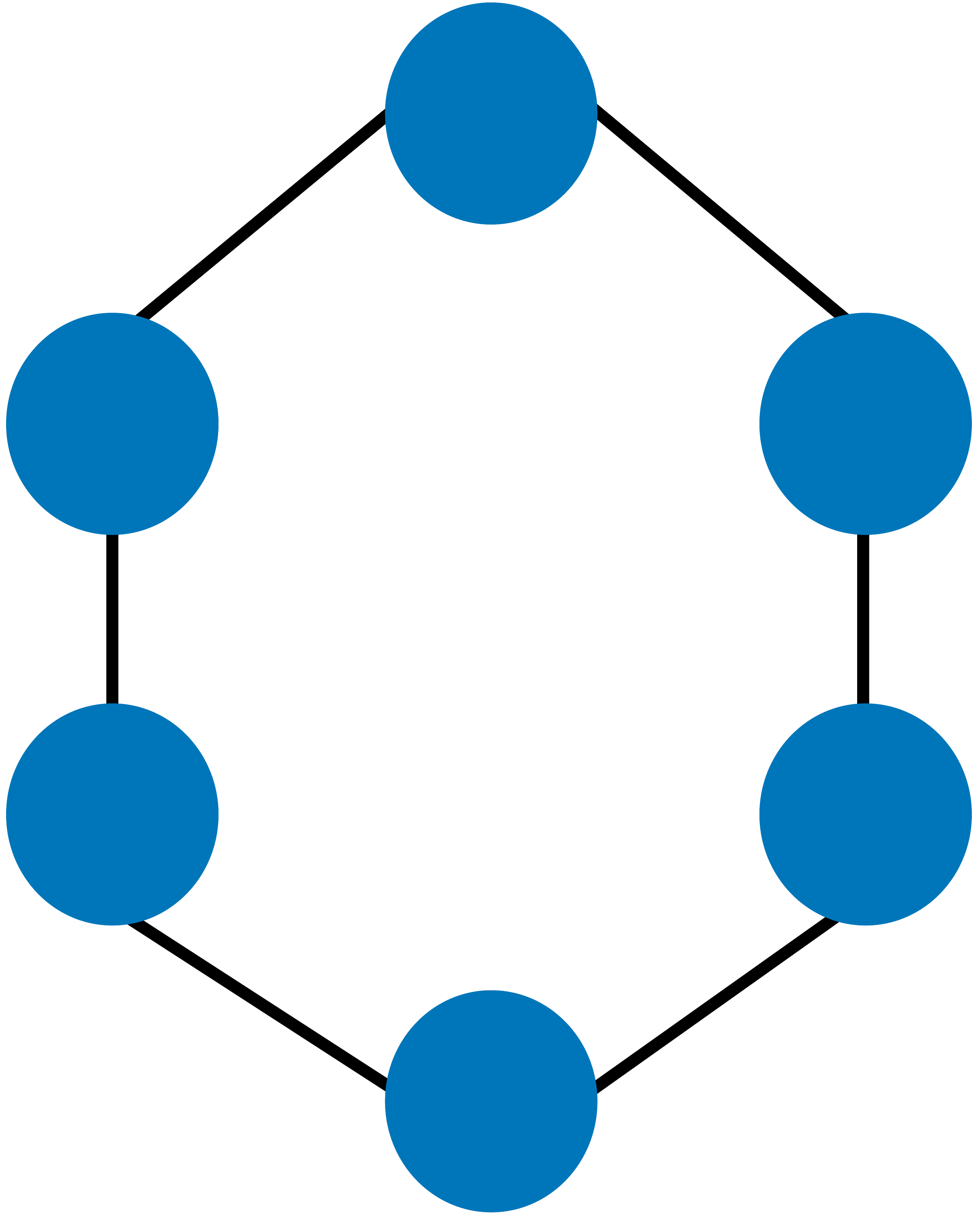}&
     \includegraphics[width=0.28in, height=0.28in]{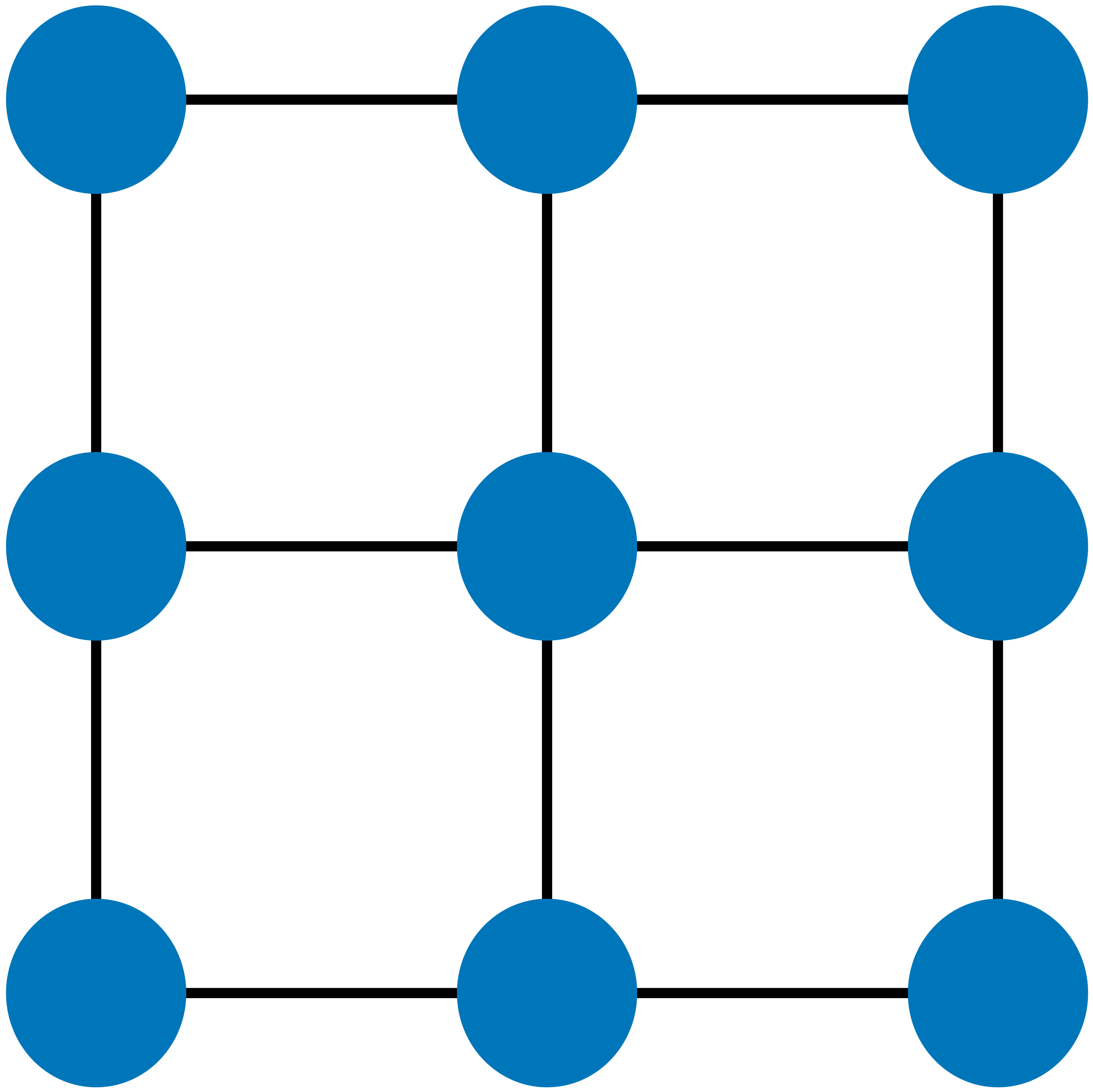}&
    \includegraphics[width=0.5in, height=0.32in]{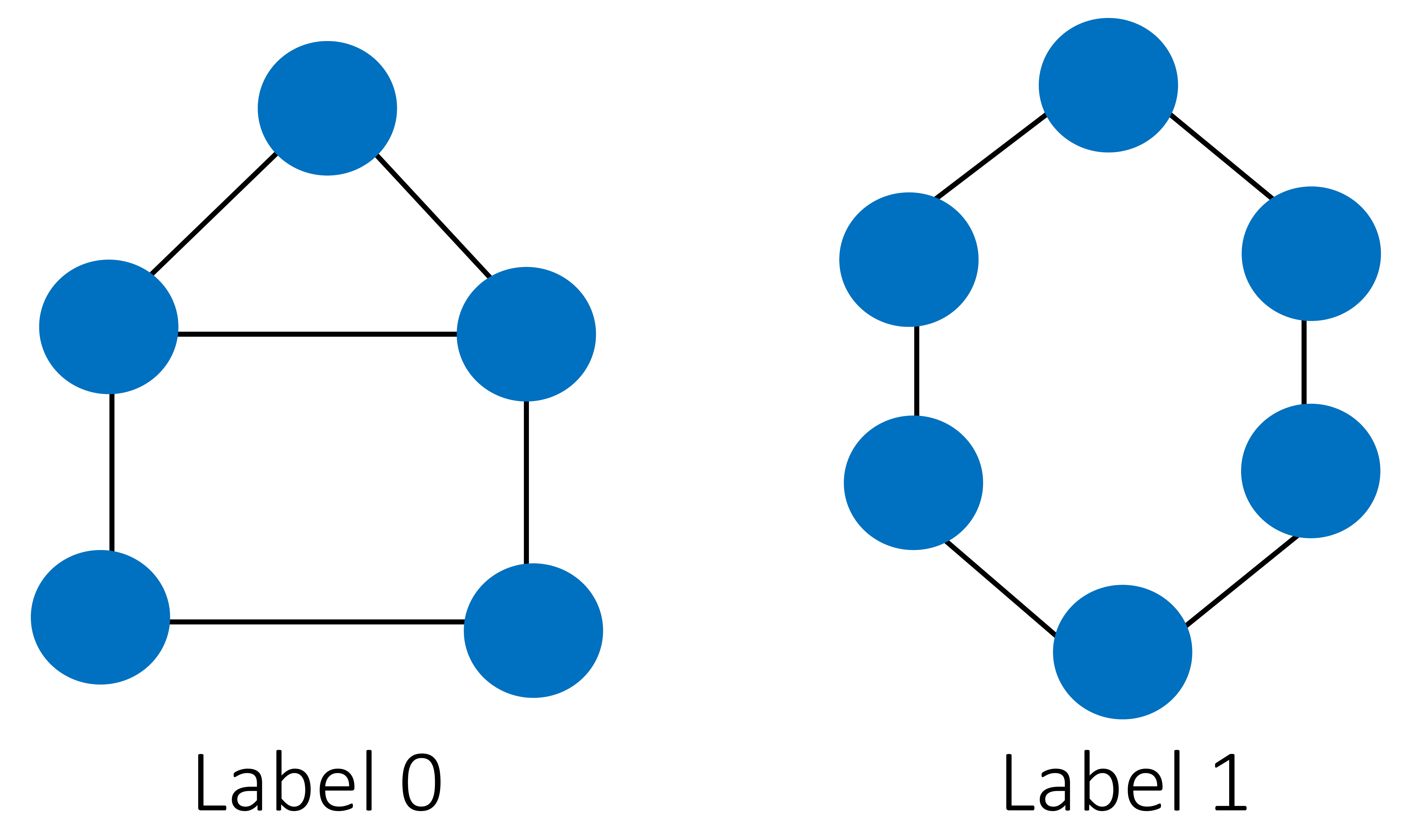} &
     \includegraphics[width=0.4in, height=0.28in]{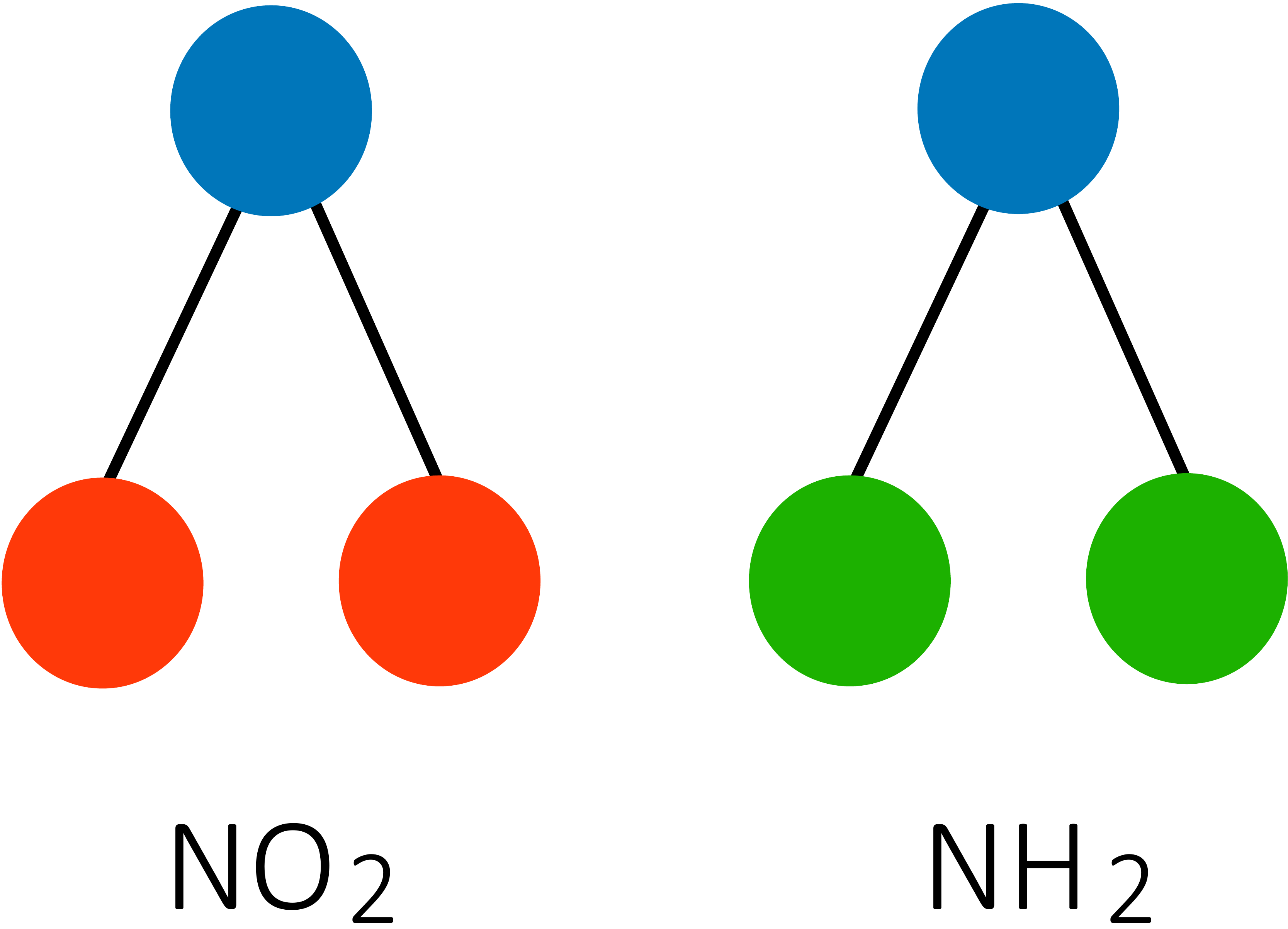} \vspace{0.6em}\\
     
    Features & None & $\mathcal{N}(\mu_l,\sigma_l)$ & None & None & None & Atom types \\[0.2cm]
    \multicolumn{7}{c}{\normalfont{\textbf{Visualization}}}\\
        \smallskip
    Explanations by GraphSVX &
     \includegraphics[width=1.6cm, height=1.6cm]{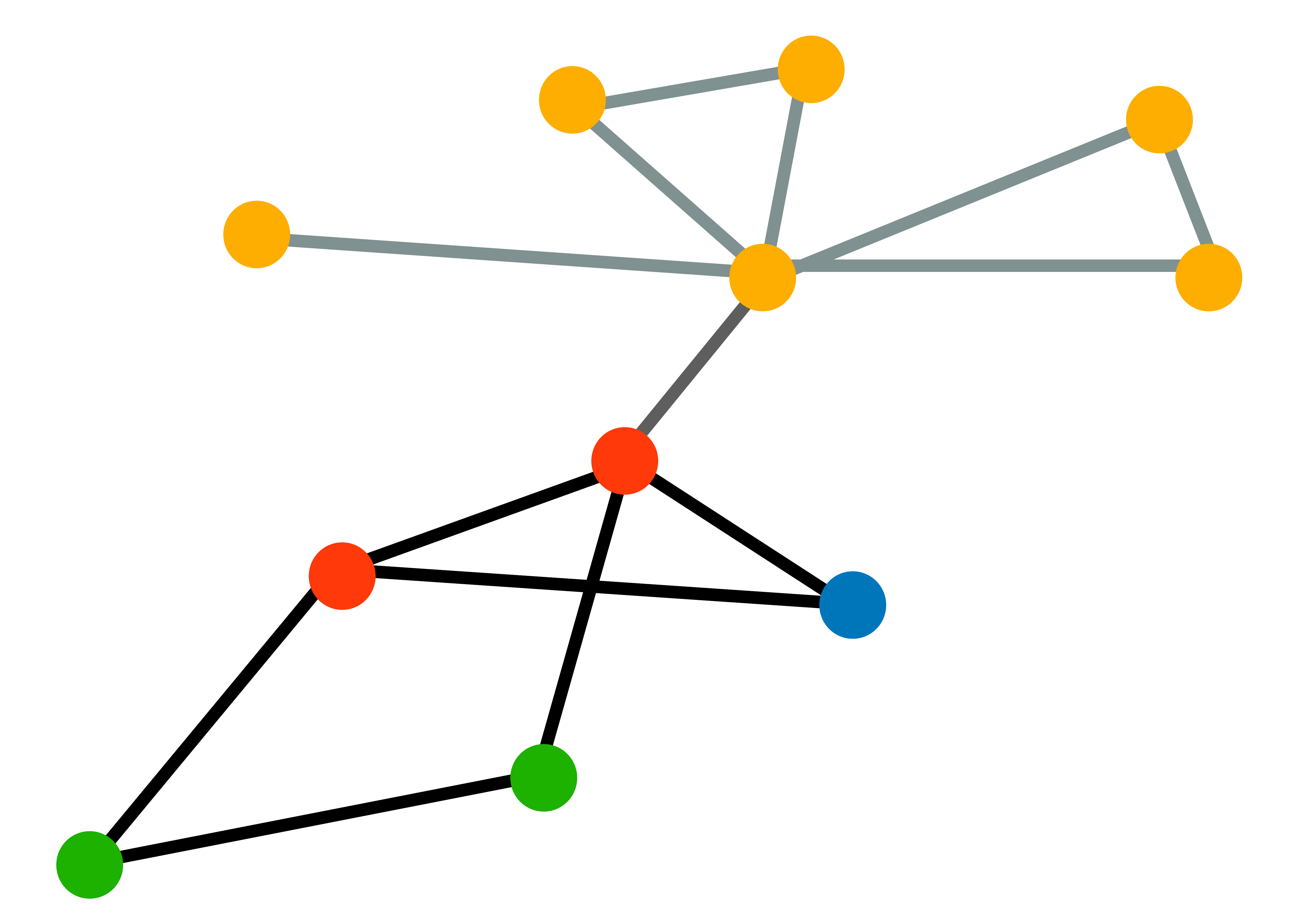}&
     \includegraphics[width=1.6cm, height=1.6cm]{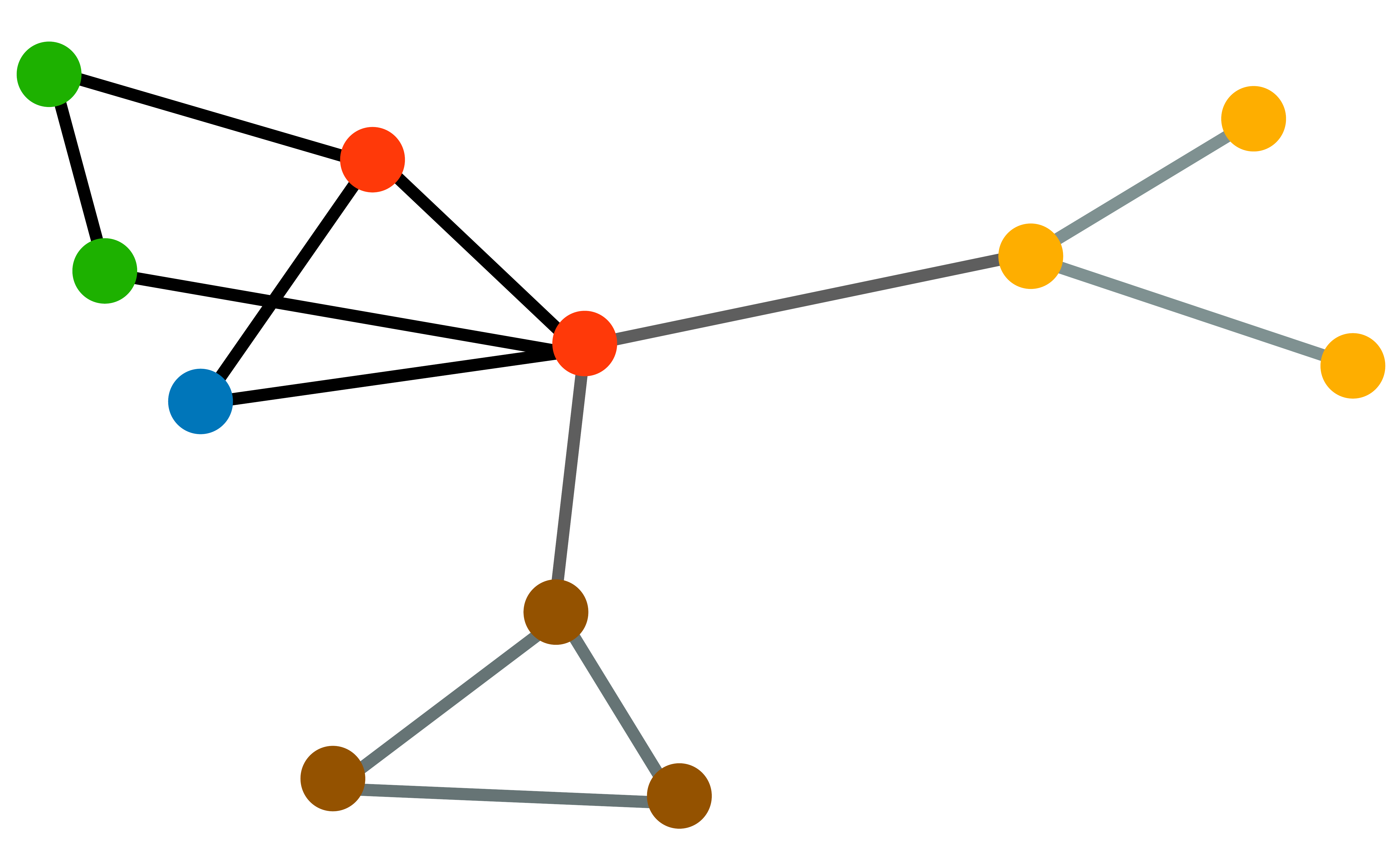}&
     \includegraphics[width=1.6cm, height=1.6cm]{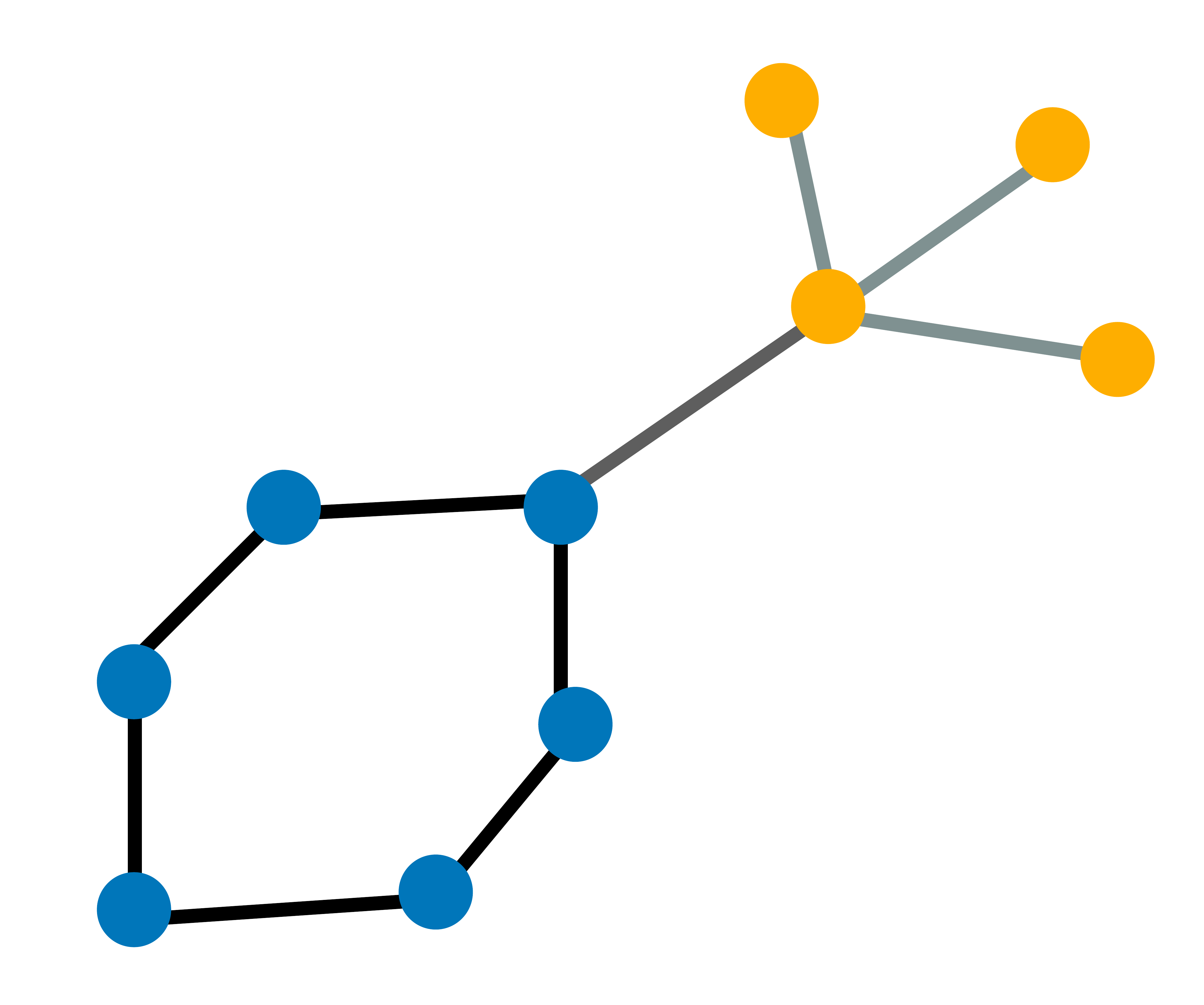}&
     \includegraphics[width=1.6cm, height=1.6cm]{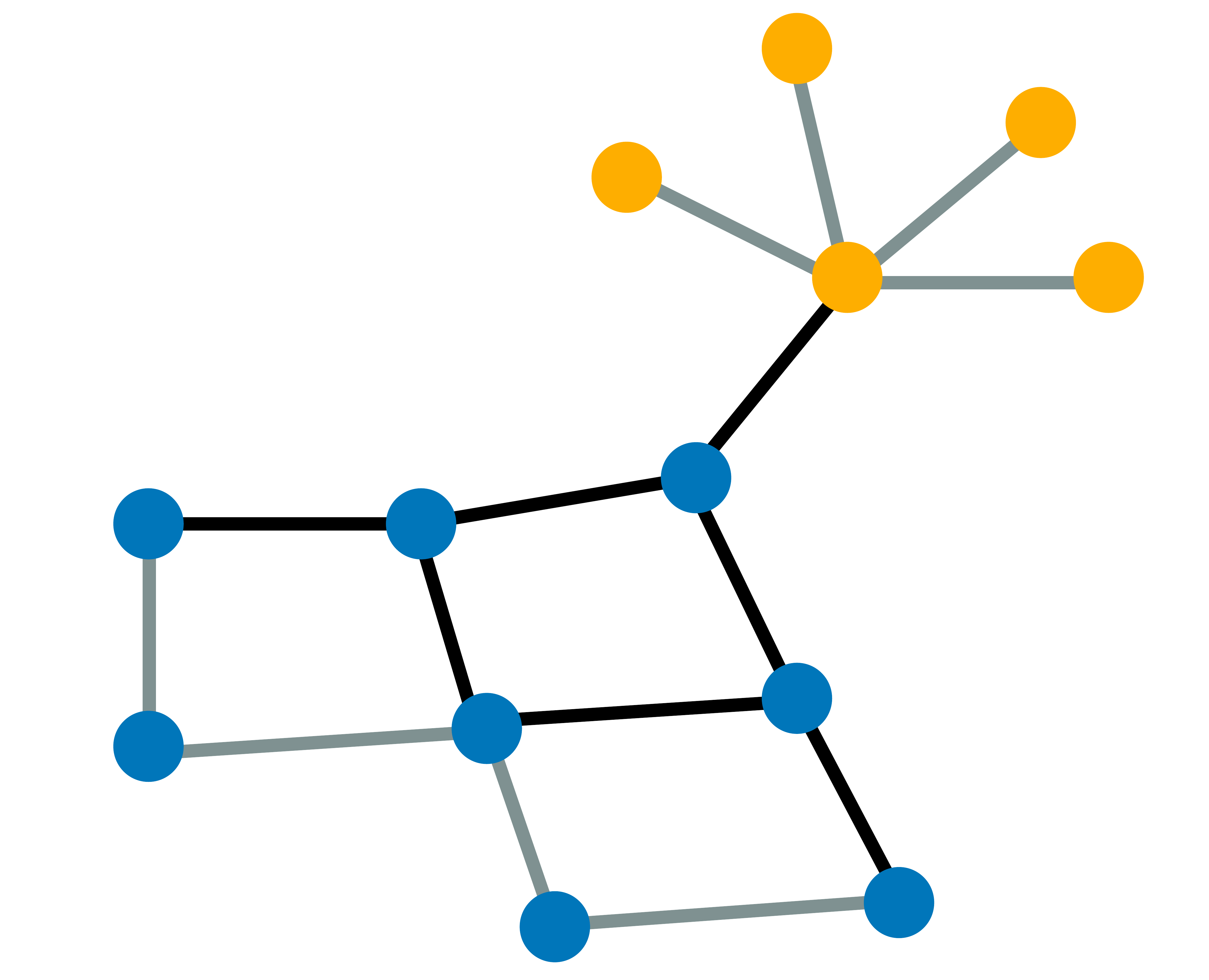}&
    \includegraphics[width=1.6cm, height=1.6cm]{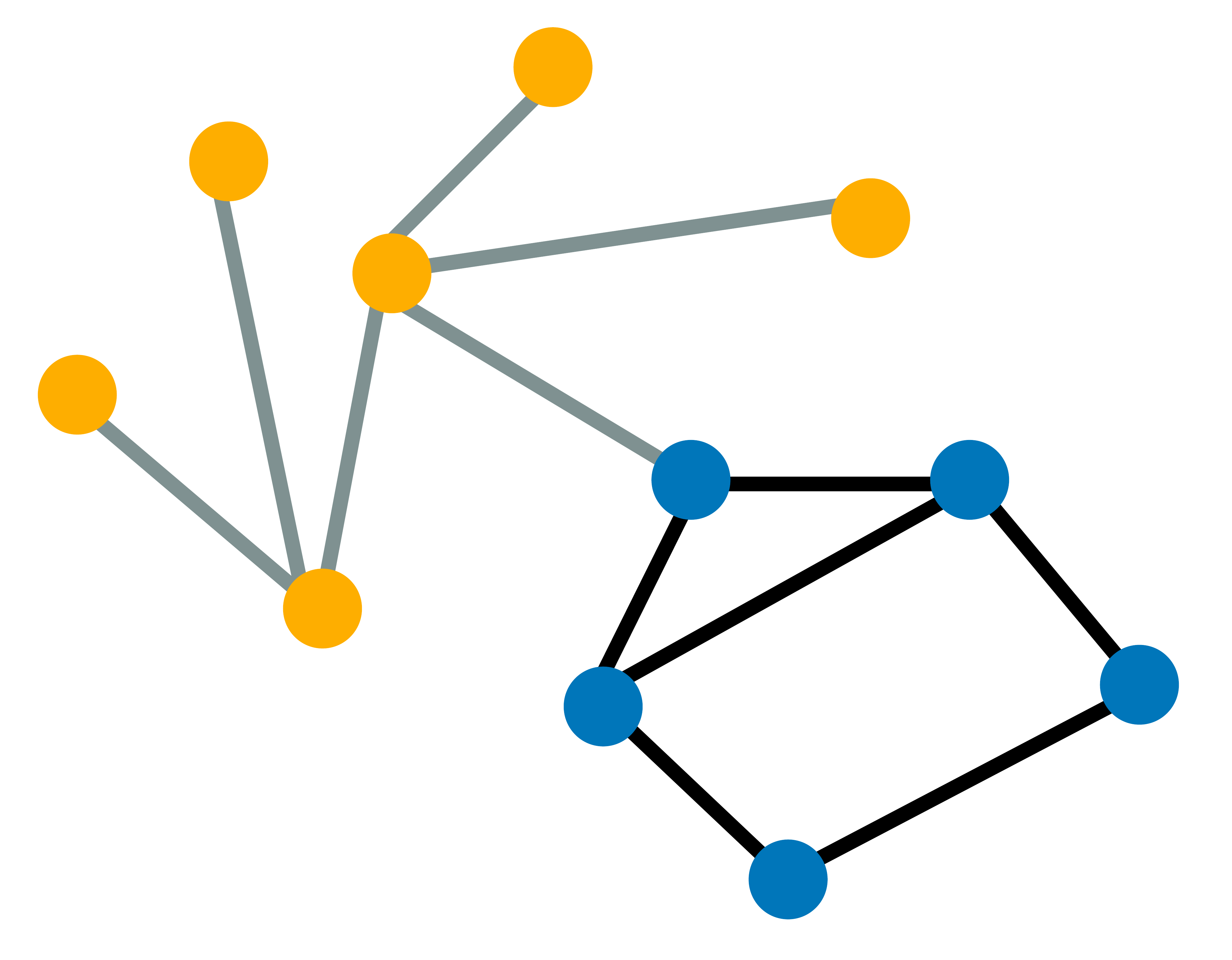} &
    \includegraphics[width=1.6cm, height=1.6cm]{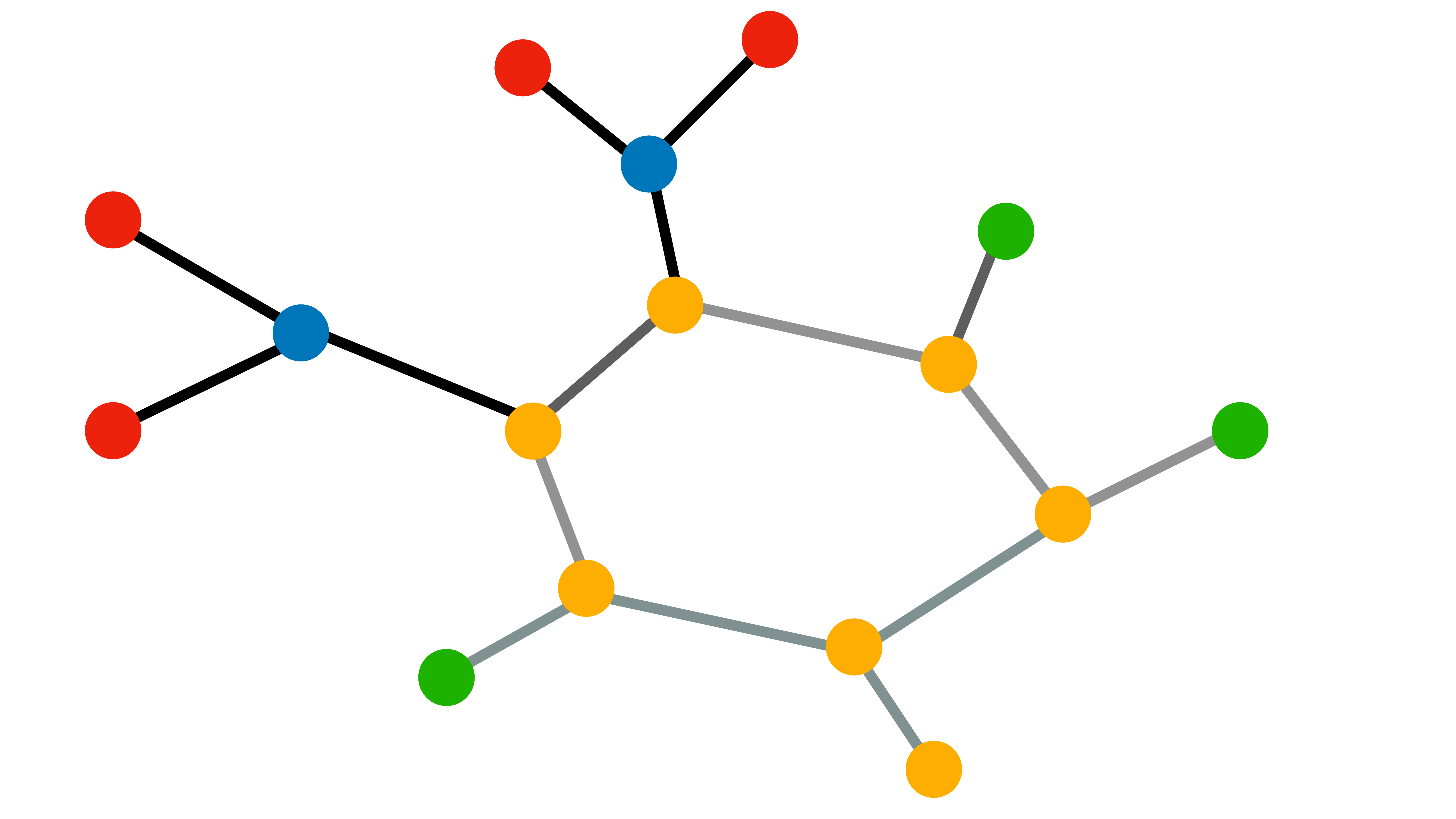} \\   
    \\ 
    \multicolumn{7}{c}{\small{\textbf{Explanation Accuracy}}}\\
    \midrule
     {\footnotesize{GNNExplainer}} & {\footnotesize{0.83}} & {\footnotesize{0.75}}   & {\footnotesize{0.86}}  & {\footnotesize{0.84}} & {\footnotesize{0.68}} & {\footnotesize{\smash{\raisebox{-2pt}{0.65}}}} \vspace{0.3em} \\ 
    \rowcolor{cycle2!8} {\footnotesize{PGM-Explainer}} & {\footnotesize{0.96}}  & {\footnotesize{0.92}}  & {\footnotesize{0.95}}  & {\footnotesize{0.87}} & {\footnotesize{0.91}} & {\footnotesize{\smash{\raisebox{-3pt}{0.72}}}} \vspace{0.3em} \\ 
    {\footnotesize{PGExplainer}} & {\footnotesize{0.92}} & {\footnotesize{0.81}}   & {\footnotesize{0.96}}  & {\footnotesize{0.88}}  & {\footnotesize{0.85}} & {\footnotesize{\smash{\raisebox{-3pt}{\textbf{0.79}}}}}  \vspace{0.3em}\\
    \rowcolor{cycle2!8} {\footnotesize{GraphSVX}} & {\footnotesize{\textbf{0.99}}} & {\footnotesize{\textbf{0.93}}}   & {\footnotesize{\textbf{0.97}}} & {\footnotesize{\textbf{0.93}}} & {\footnotesize{\textbf{0.99}}} & {\footnotesize{\smash{\raisebox{-3pt}{0.77}}}} \vspace{0.3em} \\ 
     \midrule
      \end{tabular}
      \end{adjustbox}
      \caption{Evaluation of GraphSVX and baseline GNN explainers on various datasets. The top part describes the construction of each dataset, with its base graph, the motif added, and the node features generated. Node labels are represented by colors. Then, we provide a visualisation of GraphSVX's explanations, where an important substructure is drawn in bold, as well as a quantitative evaluation based on the accuracy metric.}
        \vspace{-2.5em}
      \label{tab:results}
      \end{center}
\end{scriptsize}
\end{table*}
\normalfont

\noindent
\textbf{Results}. The results on both synthetic and real-life datasets are summarized in Table \ref{tab:results}. As shown both visually and quantitatively, GraphSVX correctly identifies essential graph structure, outperforming the leading baselines on all but one task, in addition to offering higher theoretical guarantees and human-friendly explanations. On \textsl{MUTAG}, the special nature of the dataset and ground truth favours edge explanation methods, which capture slightly more information than node explainers. Hence, we expect PGExplainer to perform better. For \textsl{BA-Community}, GraphSVX demonstrates its ability to identify relevant features and nodes together, as it also identifies important node features with 100\% accuracy. In terms of efficiency, our explainer is slower than the scalable PGExplainer despite our efficient approximation, but is often comparable to GNNExplainer. Running time experiments as well as existence of desirable properties (certainty, stability, consistency, comprehensibility, etc.) are given in Appendix \ref{A_eval} and \ref{A_prop}. 

\subsection{Real-world datasets without ground truth}

Previous experiments involve mostly synthetic datasets, which are not totally representative of real-life scenarios. Hence, in this section, we evaluate GraphSVX on two real-world datasets without ground truth explanations: \textsl{Cora} and \textsl{PubMed}. Instead of looking if the explainer provides the correct explanation, we check that it does not provide a bad one. In particular, we introduce noisy features and nodes to the dataset, train a new GNN on the latter (which we verify do not leverage these noisy variables) and observe if our explainer includes them in explanations. In different terms, we investigate if the explainer filters useless features/nodes in complex datasets, selecting only relevant information in explanations. \\

\noindent
\textbf{Datasets}. \textsl{Cora} is a citation graph where nodes represent articles and edges represent citations between pairs of papers. The task involved is document classification where the goal is to categorise each paper into one out of seven categories. Each feature indicates the absence/presence of the corresponding term in its abstract. \textsl{PubMed} is also a publication dataset with three classes and 500 features, each indicating the TF-IDF value of the corresponding word. \\

\noindent
\textbf{Noisy features}. Concretely, we artificially add 20\% of new “noisy” features to the dataset. We define these new features using existing ones' distribution. We re-train a 2-layer GCN and a 2-layer GAT model on this noisy data, whose test accuracy is above 75\%. The detailed experimental settings are provided in Appendix \ref{A_eval}. We then produce explanations for 50 test samples using different explainer baselines, on \textsl{Cora} and \textsl{PubMed}, and we compare their performance by assessing how many noisy features are included in explanations among top-$k$ features. Ultimately, we compare the resulting frequency distributions using a kernel density estimator (KDE). Intuitively, since features are noisy, they are not used by the GNN model, and thus are unimportant. Therefore, the less noisy features are included in the explanation, the better the explainer. 

Baselines include GNNExplainer, GraphLIME (described previously) as well as the well-known SHAP \cite{lundberg2017unified} and LIME \cite{ribeiro2016should} models. We also compare GraphSVX to a method based on a Greedy procedure, which greedily removes the most contributory features/nodes of the prediction until the prediction changes, and to the Random procedure, which randomly selects $k$ features/nodes as the explanations for the prediction being explained. 

The results are depicted in Fig. \ref{KDE} (a)-(b). For all GNNs and on all datasets, the  number of noisy features selected by GraphSVX is close to zero, and in general lower than existing baselines---demonstrating its robustness to noise. \\

\begin{figure}[t]
     \centering

\subfigure[\textsl{Cora}-features]{
\begin{minipage}[t]{0.235\linewidth}
\centering
\includegraphics[width=\linewidth]{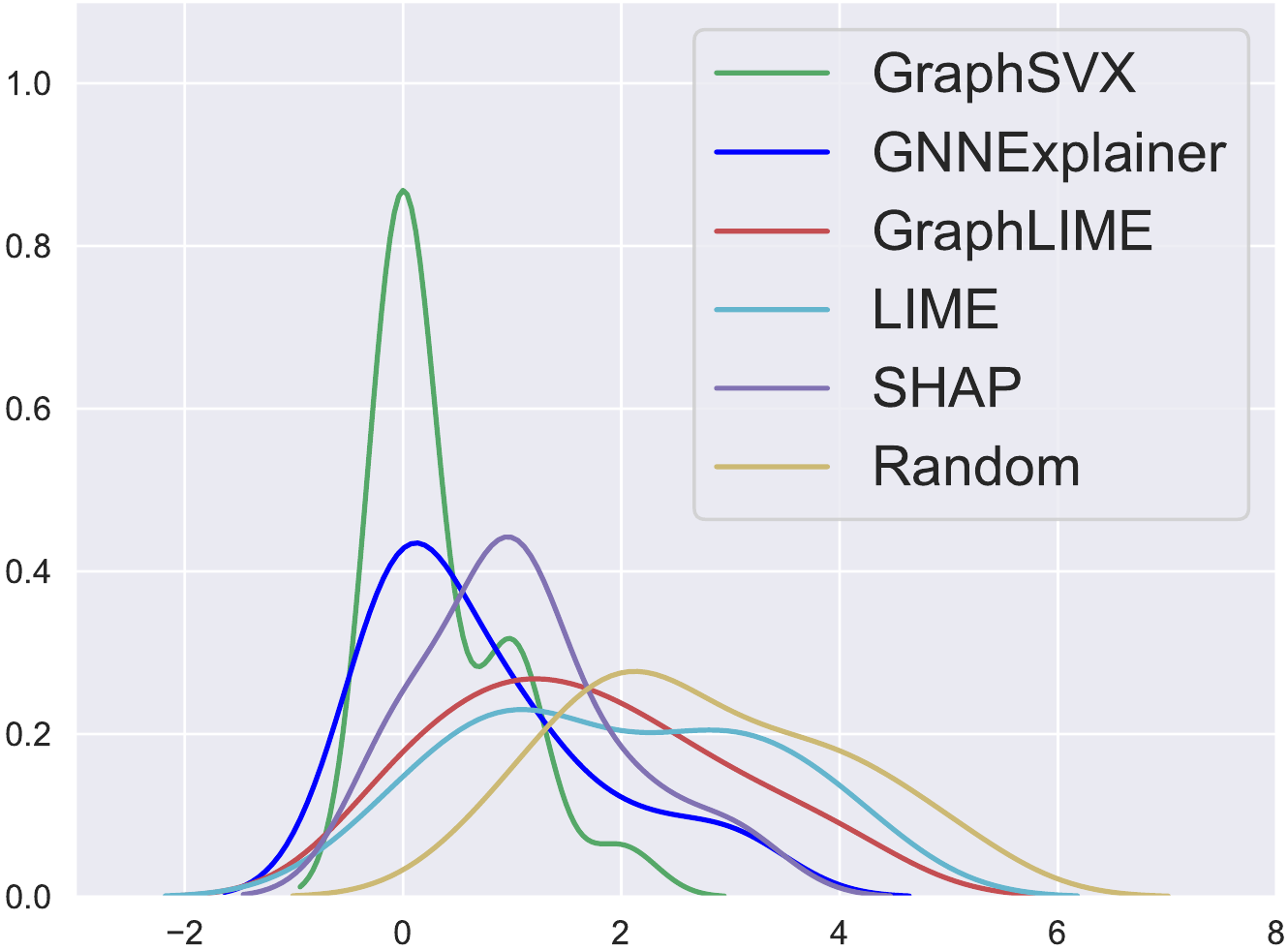}
\end{minipage}%
}%
\vspace{-.1cm}
\subfigure[\textsl{PubMed}-features]{
\begin{minipage}[t]{0.235\linewidth}
\centering
\includegraphics[width=\linewidth]{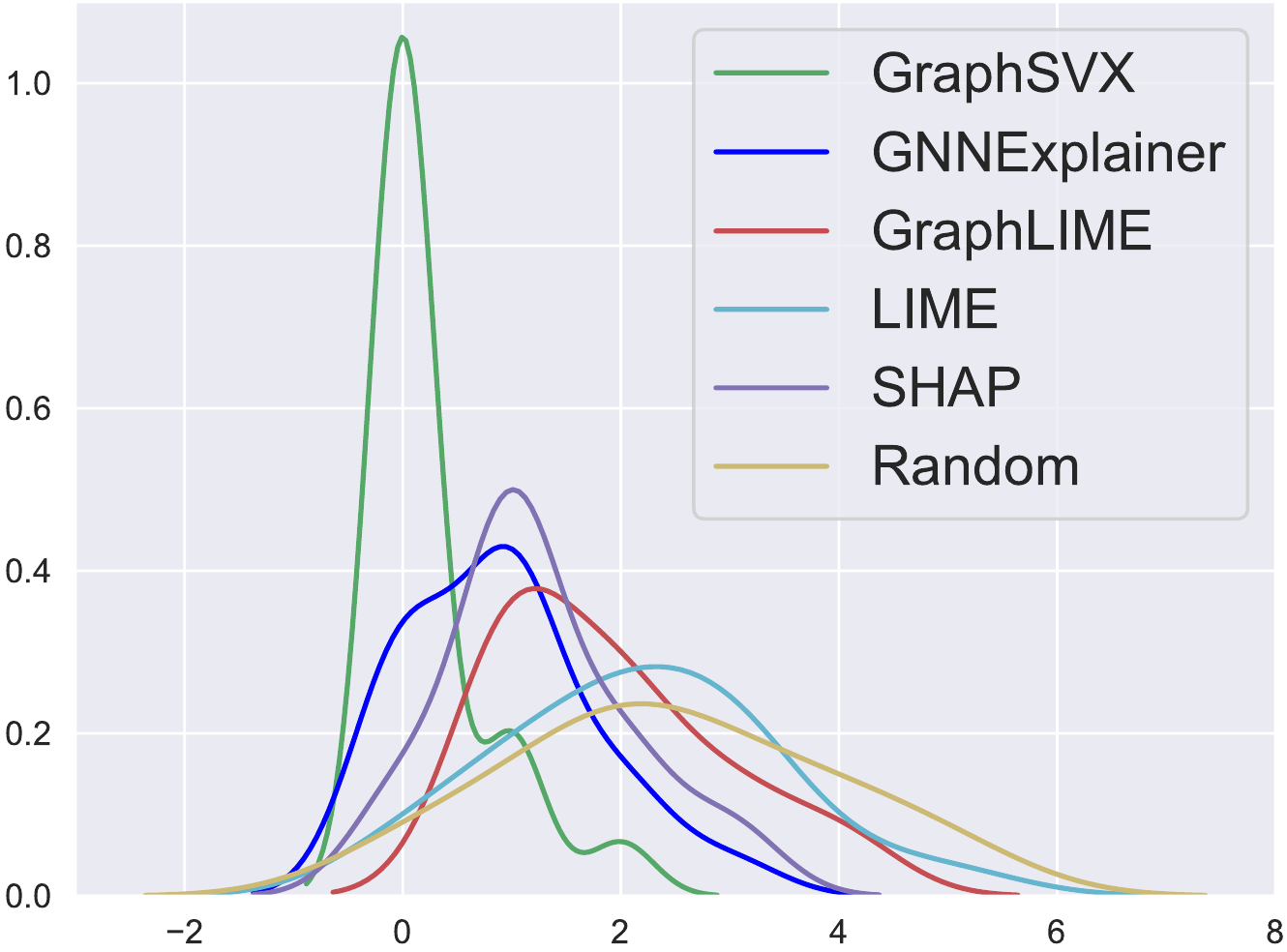}
\end{minipage}%
}%
\vspace{-.1cm}
\subfigure[\textsl{Cora}-nodes]{
\begin{minipage}[t]{0.235\linewidth}
\centering
\includegraphics[width=\linewidth]{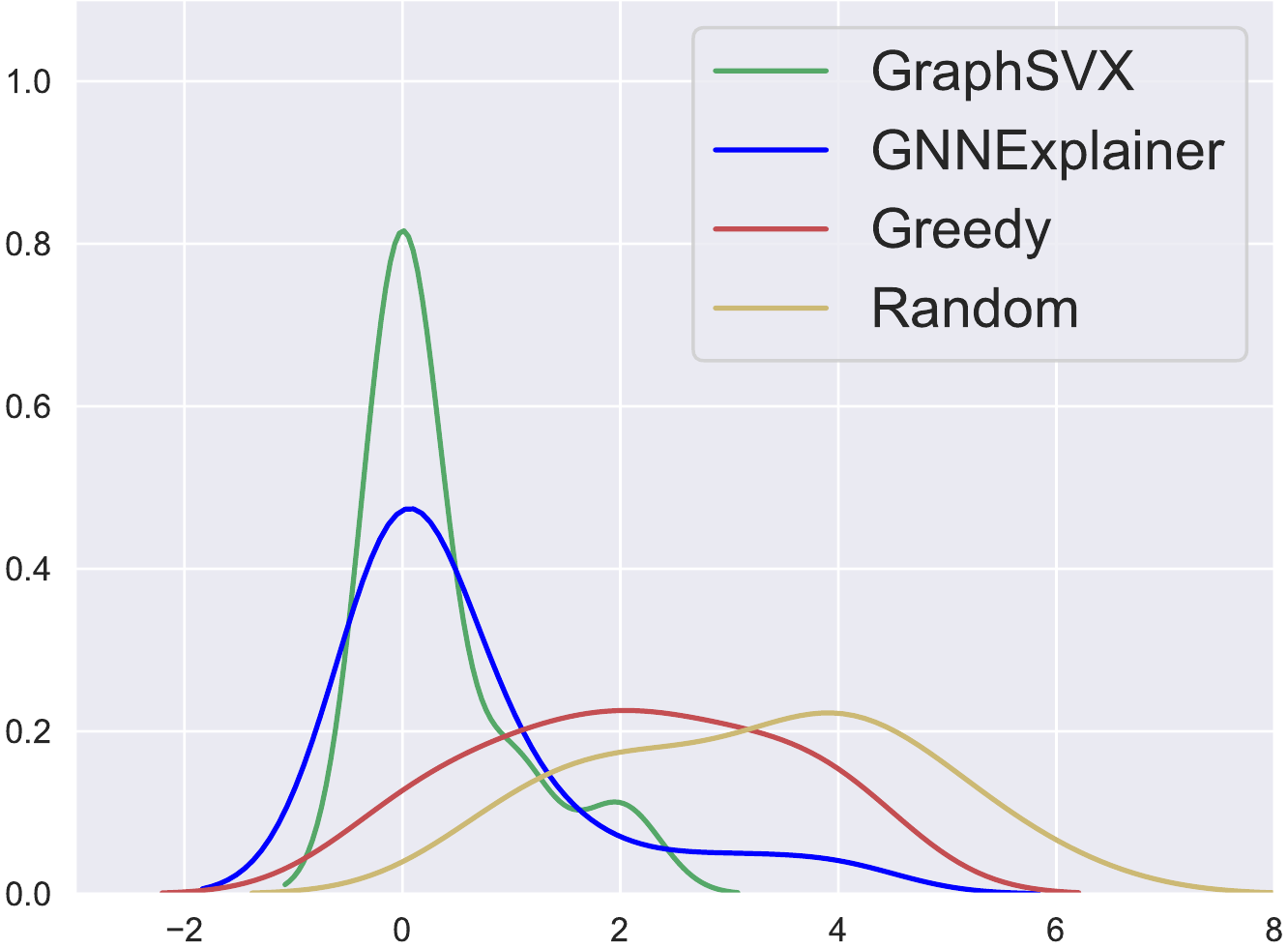}
\end{minipage}%
}%
\subfigure[\textsl{PubMed}-nodes]{
\begin{minipage}[t]{0.235\linewidth}
\centering
\includegraphics[width=\linewidth]{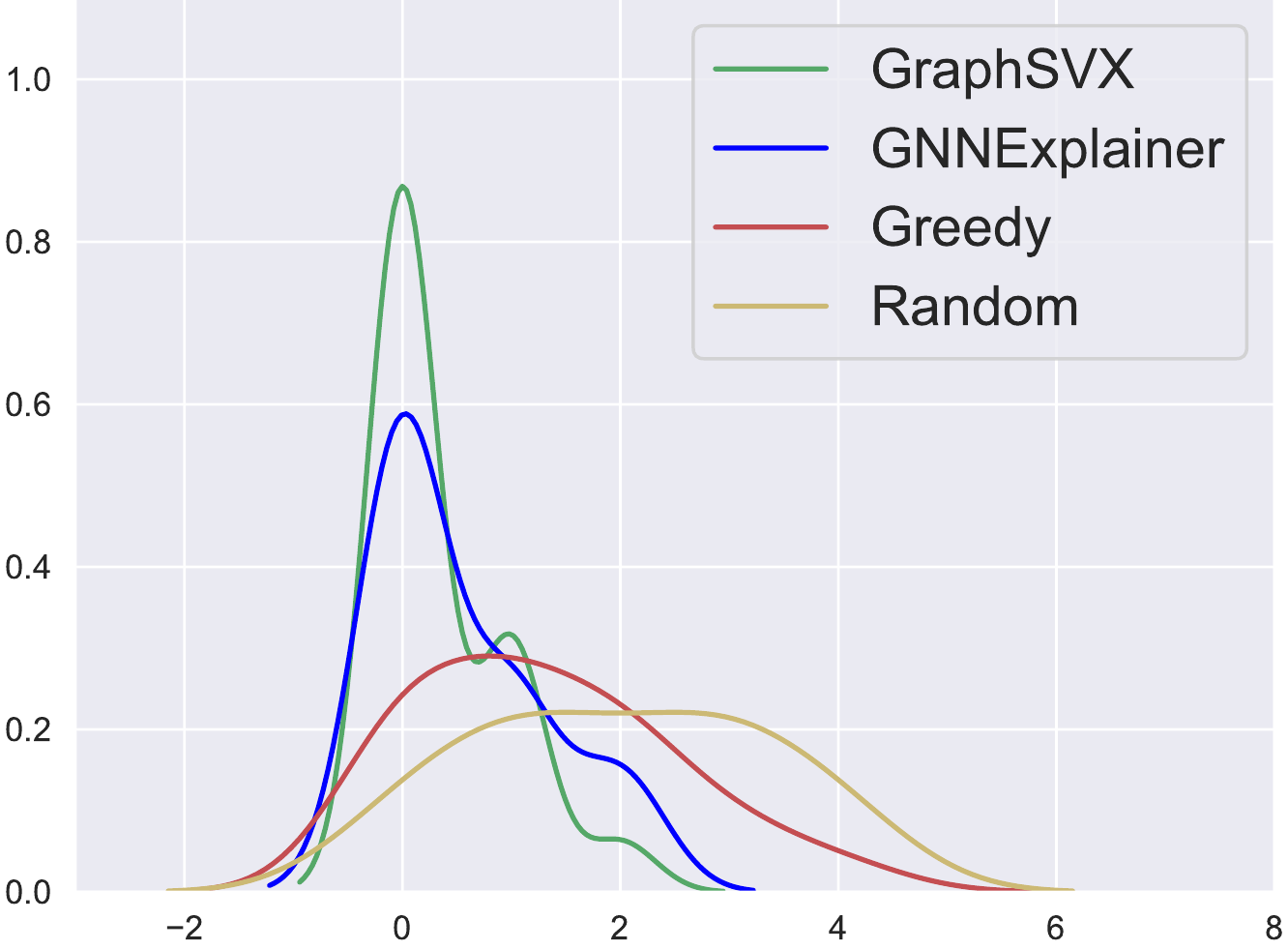}
\end{minipage}%
} 
\caption{ Frequency distributions of noisy features (a), (b) and nodes (c), (d) using a GAT model on \textsl{Cora} and \textsl{PubMed}.}
\label{KDE}
 \end{figure}
\vspace{-.05cm}

\noindent
\textbf{Noisy nodes}. We follow a similar idea for noisy neighbours instead of noisy features. Each new node's connectivity and feature vector are determined using the dataset's distribution. Only a few baselines (GNNExplainer, Greedy, Random) among the ones selected previously can be included for this task since GraphLIME, SHAP, and LIME do not provide explanations for nodes. 

As before, this evaluation builds on the assumption that a well-performing model will not consider as essential these noisy variables. We check the validity of this assumption for the GAT model by looking at its attention weights. We retrieve the average attention weight of each node across the different GAT layers and compare the one attributed to noisy nodes versus normal nodes. We expect it to be lower for noisy nodes, which proves to be true: $0.11$ vs. $0.15$.  

As shown in Fig. \ref{KDE} (c)-(d), GraphSVX also outperforms all baselines, showing nearly no noisy nodes in explanations. Nevertheless, GNNExplainer achieves almost as good performance on both datasets (and in several evaluation settings).

\label{_Eval}

\section{Conclusion}
In this paper, we have first introduced a unified framework for explaining GNNs, showing how various explainers could be expressed as instances of it. We then use this complete view to define GraphSVX, which conscientiously exploits the above pipeline to output explanations for graph topology and node features endowed with desirable theoretical and human-centric properties, eligible of a good explainer. We achieve this by defining a decomposition method that builds an explanation model on a perturbed dataset, ultimately computing the Shapley values from game theory, that we extended to graphs. Through a comprehensive evaluation, we not only achieve state-of-the-art performance on various graph and node classification tasks but also demonstrate the desirable properties of GraphSVX.

\vspace{.5cm}
\noindent \textbf{Acknowledgements.} Supported in part by ANR (French National Research Agency) under the JCJC project GraphIA (ANR-20-CE23-0009-01).

\vspace{.5cm}
\noindent \textbf{In:} The European Conference on Machine Learning and Principles and Practice of Knowledge Discovery in Databases (ECML-PKDD) 2021.

\label{_Conclu}

\bibliographystyle{splncs04}
\bibliography{main}

\newpage

\appendix

\section{Graph Neural Networks}
\label{A_gnn}

In this section, we define the two main GNNs used during the evaluation phase. Consider a graph $\mathcal{G}$ with feature matrix $\mathbf{X}$, adjacency matrix $\mathbf{A}$, and diagonal degree matrix $\mathbf{D}$. Let $N$ be the number of nodes, $C$ the number of input features, $F$ of output features, and $\mathbf{Z}$ the output. Note that, $\mathbf{Z}\in \mathbb{R}^{N\times F}, \mathbf{X} \in \mathbb{R}^{N \times C}, \mathbf{W} \in \mathbb{R}^{C \times F}$, where $\mathbf{W}$ is a weight matrix.

\subsection{Graph Convolution Networks (GCN)}
The output of one GCN \cite{kipf2016semi} layer is obtained as follows: 
\begin{align*}
    \mathbf{H}^{(l+1)} = f(\mathbf{H}^{(l)},\mathbf{A}) = \sigma\big(\mathbf{\tilde{D}}^{-1/2}\mathbf{\tilde{A}}\mathbf{\tilde{D}}^{-1/2}  \mathbf{H}^{(l)} \mathbf{W}^{(l)}\big), 
\end{align*}
with $\mathbf{H}^0=\mathbf{X}$, $\mathbf{H}^L=\mathbf{Z}$ and $\sigma$ often chosen to be the ReLU function. Also, $\mathbf{\tilde{A}} = \mathbf{A} + \mathbf{I}$ and $\mathbf{\tilde{D}} = \mathbf{D} + \mathbf{I}$ (adding self loops). Each element in $\mathbf{H}^{(l+1)}$ can thus be written as $\mathbf{h}^{(l+1)}_i = \sigma \Big(\sum_{j \in \mathcal{N}_{v_i}} \dfrac{1}{c_{ij}} \mathbf{W}^{(l)}\mathbf{h}_j^{(l)} \Big)$.

\subsection{Graph Attention Networks (GAT)}

The GAT \cite{velivckovic2017graph} model is extremely similar to the GCN model. The key difference lies in how information is aggregated from one-hop neighbour: 
\begin{align*}
    \mathbf{h}^{(l+1)}_i &= \sigma \Big(\sum_{j \in \mathcal{N}_{v_i})} \alpha_{ij}^{(l)} \mathbf{W}^{(l)} \mathbf{h}_j^{(l)} \Big), \\ 
    \text{with} \quad \alpha_{ij} &= \dfrac{\exp(\text{LeakyReLU}(\vec{a}^\top[\mathbf{W}\mathbf{h}_i||\mathbf{W} \mathbf{h}_j]))} {\sum_{k \in \mathcal{N}_{v_i}} \exp(\text{LeakyReLU}(\vec{a}^\top[\mathbf{W}\mathbf{h}_i||\mathbf{W}\mathbf{h}_k]))}.
\end{align*}
In the above, $\vec{a}$ is single feed forward neural network that concatenates both inputs and produces a unique value. 

We can have multi-head attention where the various heads utilised are either concatenated or averaged as follows: 
\begin{equation*}
    \mathbf{h}_i^{(l+1)}= \sigma \Big(\dfrac{1}{K}\sum_{k=1}^K\sum_{j \in \mathcal{N}_{v_i}}\alpha_{ij}^{(l)} \mathbf{W}_k^{(l)} \mathbf{h}_j^{(l+1)} \Big) \quad \text{or} \quad \mathbf{h}_i^{(l+1)} = ||_{k=1}^K \sigma \Big(\sum_{j \in \mathcal{N}_{v_i}}\alpha_{ij}^{(l)} \mathbf{W}_k^{(l)} \mathbf{h}_j^{(l)} \Big).
\end{equation*}


\vspace{2mm}

\section{Extended Shapley Value Axioms}
\label{A_axioms}

Here, we provide the extension of the Shapley Value Axioms \cite{shapley1953value} to graphs. Players designate nodes of the graph (except $v$) and features of $v$; they are $F+N-1$. $\mathcal{P}$ is the power set, meaning all possible combinations of players. 
\begin{axiom}[Efficiency]
Features and nodes' contributions must add up to the difference between the original prediction and the average model prediction, i.e., $\sum_{j=1}^{F+N-1} \phi_j = f_v(\mathbf{X},\mathbf{A}) - \mathbb{E}[f_v(\mathbf{X},\mathbf{A})]$.
\end{axiom}

\begin{axiom}[Symmetry]
If~ $\forall S \in \mathcal{P}, \text{ and } k, j \notin S, \text{\textit{val}}(S\cup\{k\}) = \text{\textit{val}}(S\cup\{j\}), \text{then } \phi_k(\text{\textit{val}}) =\phi_j(\text{\textit{val}})$. 
The contribution of two players $j$ and $k$ should be identical if they contribute equally to all possible coalitions of nodes and features. 
\end{axiom}

\begin{axiom}[Dummy]
If $\forall S \in \mathcal{P}, \text{ and } j \notin S, \text{\textit{val}}(S\cup \{j\}) = \text{\textit{val}}(S), \text{ then } \phi_j(\text{\textit{val}}) = 0$.  A player $j$ which does not influence the predicted value regardless of the coalition sampled should receive a Shapley Value of $0$. 
\end{axiom}

\begin{axiom}[Additivity]
$\text{ For all pairs of characteristic functions } v,w: \phi_j(v+w) = \phi_j(v) + \phi_j(w), \text{ where } (v+w)(S) = v(S) + w(S) \quad \text{for all } S \in \mathcal{P}$. 
For each instance, the sum of the Shapley Values for two different prediction tasks is equal to the Shapley Value of one task whose prediction is defined from the sum of the two instances' predictions.
This last axiom constrains the value to be consistent in the space of all prediction tasks.
\end{axiom}


\vspace{2mm}

\section{GraphSVX}
\label{A_GSVX}

In this section, we provide the pseudocode of some algorithms evoked in the paper: GraphSVX, the mask generator \textsc{Mask} and the indirect effect of nodes on prediction.

\begin{algorithm}[h]
    \centering
    \caption{GraphSVX}
    \label{alg:graphsvx}
\begin{small}
    \begin{algorithmic}[1]
        \STATE {\bfseries 
        Input:} $\mathcal{G} = (\mathbf{X}, \mathbf{A})$ with $N$ nodes and $F$ features. GNN model $f$. Interpretable domain $\Omega$, loss function $\mathcal{L}$ and kernel weight $\boldsymbol{\pi}$ as defined in Eq.(3). $P$: number of samples. $\textsc{Gen},\textsc{Mask}, \textsc{Expl}$ are detailed in the paper;
        \STATE $\mathcal{D} \leftarrow \{ \}$ ;
        \STATE $\mathbf{Z} \leftarrow \textsc{Mask}(\mathbf{X}, \mathbf{A})$ \tcp*{create a dataset of binary masks (feat and nodes)} 
         \FOR{$ (\mathbf{M}_F,\mathbf{M}_N) $ in $\mathbf{Z}$:}
            \STATE $ \mathbf{z} \leftarrow (\mathbf{M}_F||\mathbf{M}_N)$ \tcp*{concatenation} 
            \STATE $(\mathbf{X'}, \mathbf{A'}) \leftarrow \textsc{Gen}(\mathbf{M}_F,\mathbf{M}_N,\mathbf{X}, \mathbf{A})$. \tcp*{convert masks to the original input space} 
            \STATE $(\mathbf{X'}, \mathbf{A'}) \leftarrow \textsc{IE}(\mathbf{X'}, \mathbf{A'}, \mathbf{X}, \mathbf{A})$ \tcp*{see Algorithm \ref{alg:indirect_effect}}
            \STATE $\textbf{y} \leftarrow f(\mathbf{X'}, \mathbf{A'})$  \tcp*{new GNN prediction}
            \STATE $\mathcal{D} \leftarrow \mathcal{D} \cup \{ (\textbf{z}, \textbf{y} ) \}$ \tcp*{add sample to dataset}
        \ENDFOR
        \STATE $g \leftarrow \textsc{Expl}(\mathcal{D}, \mathcal{L}_{f, \boldsymbol{\pi}}, \Omega)$ \tcp*{learn explanation model $g$}
        \STATE {\bfseries 
        Return} parameters $\boldsymbol{\phi}$ of $g$ 
    \end{algorithmic}
\end{small}
\end{algorithm}

\vspace{-2em}

\begin{algorithm}[h]
    \centering
    \caption{IE - Indirect effect}
    \label{alg:indirect_effect}
\begin{small}
    \begin{algorithmic}[1]
        \STATE {\bfseries 
        Input:} Explained node $v$, $\mathcal{N}^k_v$ set of $k$-hop neighbours of $v$, binary masks $\mathbf{z}$, $\mathcal{G}=(\mathbf{X}, \mathbf{A})$ and  $\mathcal{G'}=(\mathbf{X'}, \mathbf{A'})$;
        \STATE $\mathcal{G'_S} \leftarrow Subgraph_{\mathcal{G'}}(\mathcal{N}^k_v)$ \tcp*{k-hop subgraph of $v$ in $\mathcal{G'}$}
        
        \FOR{ $w$ in $\{u \in \mathcal{N}^k_v | z_u=1\}$}
            \IF{$\textsc{Disconnected}(w,v)$ in $\mathcal{G'_S}$}
                \STATE $\mathcal{P}_{all} \leftarrow \textsc{Dijkistra}_{\mathcal{G}}(v,w)$ \tcp*{finds shortest paths in $\mathcal{G}$ between (v,w)} 
                \STATE $\mathcal{P} \leftarrow$ \textsc{RandomSample}$(\mathcal{P}_{all}, 1)$  \tcp*{sample one path at random} 
                \FOR{$e$ in $\mathcal{P}_{edges}$}
                    \STATE $\mathbf{A}'_e = 1$ \tcp*{include in $\mathcal{G'}$ edges of the selected path} 
                \ENDFOR
                \FOR{$n$ in $\mathcal{P}_{nodes} \setminus \{v,w\}$}
                    \STATE $\mathbf{X}'_{n*} = \mathbf{X}_{v*}$ \tcp*{set new nodes of $\mathcal{G'}$ to feature values of v} 
                \ENDFOR
            \ENDIF
        \ENDFOR
        \STATE {\bfseries 
        Return} $(\mathbf{X'}, \mathbf{A'})$ 
    \end{algorithmic}
\end{small}
\end{algorithm}

\begin{algorithm}[h]
    \centering
    \caption{\textsc{Mask} - Smart sampling for nodes and features}
    \label{alg:separate_smart_sampling}
    \begin{small}
    \begin{algorithmic}[1]
       \STATE {\bfseries 
        Input:} number of samples $P$ (budget), $F$ features and $N$ neighbours, $M=F+N$, $r$ regularisation parameter for node/feature importance, $\textsc{SmarterSeparate}$, kernel weight function $\pi_v$, dataset $\mathbf{Z} \in \mathbb{R}^{P\times (F+N)}$ of samples $\mathbf{z}$;
        \STATE $P' = \left\{
        \begin{array}{ll}
            \text{int}( r * P ) & \mbox{if $r$ exists} \\
            \text{int}(P \times F/M ) & \mbox{if $F=0$ or $D=0$}  \\
            \text{int}(0.5* P/2 + 0.5 * P * F/M) & \mbox{else}
        \end{array}
        \right.$  \tcp*{samples dedicated to features}
        \STATE $\boldsymbol{\pi} \leftarrow \{\}$ ;
        \STATE $\mathbf{Z}[:P',F:] \leftarrow \mathbf{0}$, $\mathbf{Z}[P':,:F] \leftarrow \mathbf{1}$ ;
        \STATE $\mathbf{Z}_f \leftarrow \textsc{SmarterSeparate}(\mathbf{X}, \mathbf{A}, P', F)$  \tcp*{see Algorithm \ref{alg:mask_generator}}
        \STATE Shuffle rows of $\mathbf{Z}_f$ ;
        \STATE $\boldsymbol{\pi} \leftarrow \boldsymbol{\pi} \cup \pi_v(\mathbf{Z}_f)$ \tcp*{compute weights for each sample in $\mathbf{Z}_f$}
        \STATE $\mathbf{Z}[:P',:F] \leftarrow \mathbf{Z}_f$ ;
        \STATE $\mathbf{Z}_n \leftarrow \textsc{SmarterSeparate}(\mathbf{X}, \mathbf{A}, P-P', N)$ \tcp*{repeat process for nodes}
        \STATE Shuffle rows of $\mathbf{Z}_n$ ;
        \STATE $\boldsymbol{\pi} \leftarrow \boldsymbol{\pi} \cup \pi_v(\mathbf{Z}_n)$ ;
        \STATE $\mathbf{Z}[P':,F:] \leftarrow \mathbf{Z}_n$ ;
        \STATE {\bfseries 
        Return} $(\boldsymbol{\pi}, \mathbf{Z})$
    \end{algorithmic}
    \end{small}
\end{algorithm}

While the first two pseudocodes are well-detailed within the paper and easy to understand, we will give some more intuition regarding the mask generator, split into two pseudocodes. The first one, \textsc{Mask}, constructs the dataset of masks, separating samples where we consider nodes from those where we consider features, following from the \textit{relative efficiency} axiom. In practice, it samples $P'$ coalitions of features ($\mathbf{M}_F$) according to our sampling algorithm \textsc{SmarterSeparate} (Algorithm \ref{alg:mask_generator}) and computes the associated kernel weights of this ``incomplete" sample. This means that high weight is given to coalitions having all features but zero nodes, which was not the case beforehand. We then set to $\mathbf{0}$ the corresponding node masks $\mathbf{M}_N$ to form final samples $\mathbf{z}=(\mathbf{M}_F, \mathbf{M}_N)$, because we capture better the influence of $v$'s features on prediction when $v$ is isolated. Note that $P'$ is determined based on the proportion of nodes among all variables (nodes and features) considered. We repeat the same process for nodes, but this time, we include all features ($\mathbf{M}_F = \mathbf{1}$) in the final sample $\mathbf{z}=(\mathbf{M}_F, \mathbf{M}_N)$ as we would like to study the influence of neighbours of $v$ when $v$ is defined as it is (with all its features). Note the sum of features and nodes coefficients are fixed by the close to infinity weights enforced on the null coalition, the full coalition and the coalition of all features but zero nodes. We ultimately generate the dataset $\mathbf{Z}$ of binary concatenated masks of nodes and features, as well as their respective weights. The global process is detailed in Algorithm \ref{alg:graphsvx} (GraphSVX). \\

\begin{algorithm}[H]
    \centering
    \caption{\textsc{SmarterSeparate} - Smarter Mask generator}
    \label{alg:mask_generator}
    \begin{algorithmic}[1]
    \STATE {\bfseries Input:} Number of samples $P$, $M$ variables, maximum size $S$ of samples favoured, graph $\mathcal{G}=(\mathbf{X}, \mathbf{A})$;
    \STATE $\mathbf{Z} \leftarrow \{\} $ ;
    \STATE $P' \leftarrow \dfrac{9}{10} \times P$ ;
    \STATE $i=0, k=0$ ;
    \WHILE{$i<P' \text{ and } k\leq \min(S,M-1)$}
        \IF{ $i + 2 \times \binom{M}{k} < P'$}
            \STATE $\mathcal{C} \leftarrow \textit{ \{All\_coal\_of\_order\_k\}}$ \tcp*{set of all subsets of order k}
            \FOR{c in $\mathcal{C}$}
                \STATE $\mathbf{v} \leftarrow \mathbf{1}, \mathbf{u} \leftarrow \mathbf{0}$ ;
                \STATE $\mathbf{v}[c] \leftarrow \mathbf{0}$,  $\mathbf{u}[c] \leftarrow \mathbf{1}$  \tcp*{change value of subset's elements only}
                \STATE $\mathbf{Z} \leftarrow \mathbf{Z} \cup \{\mathbf{u}, \mathbf{v}\}$ \tcp*{add two samples to dataset}
            \ENDFOR
            \STATE $k \pluseq 1, i\pluseq \binom{M}{k}$ ;
        \ELSE  
            \STATE $\textbf{w} \in \mathbb{R}^M \leftarrow \mathbf{1}$ \tcp*{init weights}
            \STATE $\mathcal{C} \leftarrow \textit{ \{All\_coal\_of\_order\_k\}}$ ;
            \WHILE{$i < P'$}
                \STATE $\textbf{cw} \leftarrow \textsc{ComputeWeights}(\mathcal{C},\mathbf{w})$ \tcp*{def subset weight as sum of its elements' w}
                \STATE $\mathcal{C}_{max} \leftarrow \textsc{MaxWeightCoal}(\mathcal{C},\mathbf{cw})$ \tcp*{keep only coalitions with max weight}
                \STATE $c \leftarrow \textsc{RandomSample}(\mathcal{C}_{max})$ \tcp*{randomly sample one coalition}
                \STATE $p \leftarrow \textsc{Bernoulli}(0.5)$ \tcp*{p=0 or 1}
                \STATE $\mathbf{v} \leftarrow  [p.repeat(M)]$ \tcp*{vector of dim M with repeated value p}
                \STATE $v[c] \leftarrow [(1-p).repeat(len(c)]$ \tcp*{replace value of elements in c by 1-p}
                \STATE $\mathbf{Z} \leftarrow \mathbf{Z} \cup c$ ;
                \STATE $\mathbf{w}[c] \leftarrow (1+w^{-1})^{-1}$ \tcp*{update weights of coalition's elements}
                \STATE $i \pluseq 1$ ;
            \ENDWHILE
        \ENDIF
    \ENDWHILE
    \STATE $\mathbf{Z} \leftarrow \mathbf{Z} \cup \textsc{RandomCoalitions}(M, P-P')$ \tcp*{sample (P-P') random binary vectors of size M using a Bernoulli(0.5) distribution, and add them to the dataset}
    \STATE {\bfseries 
        Return} $\mathbf{Z}$ 
    \end{algorithmic}
\end{algorithm}

\textsc{SmarterSeparate} (Algorithm \ref{alg:mask_generator}), is a sampling algorithm that favours coalitions with high weight (small or large dimension) using smart space allocation. 10\% of the space is left for random coalitions. The rest is divided as follows. If there is enough space, it samples all possible combinations of variables (from 1 to $M$) of order $k$, where $k$ starts at $0$ and is incremented once they are all sampled. We create and add to the dataset two samples from each subset, one where all variables are included (1) except those in the subset (0), the other where all variables are excluded (0) except those in the subset (1). We repeat the process until there is not enough space left to sample all subsets of order k. In this case, we select iteratively samples of order $k$ ($\mathbf{1}$ with $k$ zeros, $\mathbf{0}$ with $k$ ones) until we reach the desired number, weighting the probability of each subset being drawn as the sum of its components' weights. The latter is defined such that variables that have already been sampled receive lower weights and therefore get a lower probability of being sampled again. This fosters diversity. Note that this is the most complex mask generator. They are some more simple ones that you can choose from in the code, and which are mentioned in the evaluation part below. 


\section{Proof of Theorem \ref{theorem}}
\label{A_proof}

In this section, we prove that GraphSVX computes the extended Shapley Values to graphs via the coefficients $\boldsymbol{\phi}$ of the explanation model $g$ as defined in the paper. Note that, we take some ideas from \cite{lundberg2017unified} in this proof.

Let $M=F+N$ be the total number of players (nodes and features). $\mathbf{Z} \in \mathbb{R}^{2^M \times M}$ is the matrix whose rows represent the random variables $\mathbf{z} = (\mathbf{M}_F||\mathbf{M}_N)$, that is, all possible coalitions of features of $v$ and graph nodes (but $v$). Let $S$ the set of indices $j \in \{1,\ldots,M\}$ where $\mathbf{z}_j=1$ and $s=|S|$. $\mathbf{W} \in \mathbb{R}^{2^{M} \times 2^{M}}$ is a diagonal matrix containing the kernel weights $k(M,s) = \boldsymbol{\pi}_{\mathbf{z}} = \dfrac{M-1}{M \cdot s} \cdot \binom{M-1}{s}^{-1}$, and $y_i=f_v(\mathbf{z}^{(i)})$ represents the output of the GNN model $f$ for a new instance $\mathbf{z}^{(i)}$---obtained via $\textsc{Gen}(\mathbf{z}^{(i)})$. Since $g$ is a Weighted Linear Regression, the known formula to estimate its parameters is: 
\begin{equation*}
    \mathbf{\phi} = (\mathbf{Z}^\top\mathbf{WZ})^{-1}\mathbf{Z}^\top\mathbf{Wy}.
\end{equation*}

\noindent In the above expression, the term $\mathbf{Z}^\top\mathbf{W}$ is an $M \times 2^M$ matrix whose each entry $(i,j)$ contains the sample weight $\boldsymbol{\pi}_{\mathbf{z}^{(i)}}$ if the corresponding player is present in this coalition $\mathbf{z}^{(i)}_j=1$, and $0$ otherwise. $\mathbf{Z}^\top\mathbf{WZ}$ has a more tricky interpretation. An entry $(j,j)$ in the diagonal contains the sum of all sample weights where player $j$ is present, and the $(i,j)^{th} = (j,i)^{th}$'s entry corresponds to the sum of sample weights where both player $i$ and player $j$ are present in the coalition, i.e. $z_i=z_j=1$.

Remember we must have $k(M,0)=k(M,M)=\infty$, so $\mathbf{W}$ is infinity for the zero row of $\mathbf{Z}$ and its row of ones. However, if we set these infinite weights to a large positive constant $c$, (with $\mathbf{I}$ being the identity matrix and $\mathbf{J}$ the matrix of ones), then 
\begin{equation}
 (\mathbf{Z}^\top \mathbf{WZ})= \dfrac{M-1}{M} \mathbf{I} + c\mathbf{J}. 
 \label{eq:proof1}
\end{equation}

\noindent Computing the expression of $\mathbf{Z}^\top\mathbf{WZ}$, the right-hand side of the sum in the equation follows easily, as the constant weight $c$ attributed to the complete coalition (set of 1s) is added to each entry of the matrix $\mathbf{Z}^\top\mathbf{WZ}$. Let's thus study how the first component is obtained. Denoting by $E_j$ the set of samples where $z_j=1$, it comes back to showing that the difference between the $(j,j)^{th}$ entry, which is in fact the sum of the weights given to each sample in $E_j$, and the $(j,i)^{th}$ entry, which denotes the sum of the weights of samples in $E_j \cap E_i$, equals $\dfrac{M-1}{M}$. Because we are looking at the identity matrix, we can choose any $i,j$ such that $i\neq j$. We thus want to prove that each diagonal entry is equal to $\dfrac{M-1}{M}$, and we proceed as follows. 

The number of $\mathbf{z} \in E_j \setminus E_i$ is $\binom{M-2}{s-1}$ as we need to place $(s-1)$ 1s into $M-2$ free spots, since in this set, $\mathbf{z}_j=1$ and $\mathbf{z}_i=0$ by definition. It follows that $1 \leq s \leq M-1$. Thus, 
\begin{align*}
    \sum_{\mathbf{z} \in E_j \setminus E_i}^{M-1} \pi_{\mathbf{z}} &= \sum_{s=1}^{M-1} \binom{M-2}{s-1} \dfrac{M-1}{M \cdot s} \cdot \binom{M-1}{s}^{-1} \\
    &= \sum_{s=1}^{M-1} \dfrac{\binom{M-2}{s-1}(M-1)(s-1)!(M-s-1)!}{M!} \\
    &= \sum_{s=1}^{M-1} \dfrac{(M-1)\binom{M-2}{s-1}}{\binom{M-2}{s-1}(M-1)M} \\
    &= \sum_{s=1}^{M-1} \dfrac{1}{M} = \dfrac{M-1}{M}.
\end{align*}

\vspace{2mm}
\noindent
We are, in fact, interested in the inverse of such matrix (when $c$ tends to infinity):
\begin{equation*}
    (\mathbf{Z}^\top\mathbf{WZ})^{-1} = \mathbf{I} + \dfrac{1}{M-1}(\mathbf{I}-\mathbf{J}).
\end{equation*}
To show this, we use the specific form of $(\mathbf{Z}^\top\mathbf{WZ})$ given in Eq. \eqref{eq:proof1} and $(\mathbf{Z}^\top\mathbf{WZ})^{-1} (\mathbf{Z}^\top\mathbf{WZ}) = \mathbf{I}$ to derive the expression of its inverse. In the equation below, we write $\mathbf{J}^2=M \cdot \mathbf{J}$ because $\mathbf{J}$ is the $M \times M$ matrix of all ones: 
\begin{align*}
    (a\mathbf{I}+c\mathbf{J})(b\mathbf{I}+d\mathbf{J})&=ab\mathbf{I}+ad\mathbf{J}+bc\mathbf{J}+cd\mathbf{J}^2 \\
    &=ab\mathbf{I} + \mathbf{J}(ad+bc) + Mcd\mathbf{J} \\ 
    &= ab\mathbf{I} + \mathbf{J} (ad+bc+Mcd).
\end{align*}
This equals $\mathbf{I}$ if $b=\dfrac{1}{a}$ and $ad+bc+Mcd=0$, so $d=-\dfrac{c}{a(Mc+a)}$, meaning that $(b\mathbf{I}+d\mathbf{J})$ is the inverse of $(a\mathbf{I}+c\mathbf{J})$. Using the above, we have that
\begin{align*}
    (\mathbf{Z}^\top\mathbf{WZ})^{-1} = (a\mathbf{I}+c\mathbf{J})^{-1} &= \dfrac{1}{a}\mathbf{I} - \dfrac{c}{a(Mc+a)}J \\
    &= \dfrac{1}{a}\mathbf{I} - \dfrac{1}{a(M+\dfrac{a}{c})}\mathbf{J} \\
    &\approx \dfrac{1}{a}\mathbf{I} - \dfrac{1}{aM}\mathbf{J} & \text{as } c \to \infty \\
    &= \dfrac{M}{M-1}(\mathbf{I} - \dfrac{\mathbf{J}}{M}) & \text{as } a=\dfrac{M-1}{M} \\
    &= \dfrac{M}{M-1}\mathbf{I} - \dfrac{\mathbf{J}}{M-1} \\
    &= \mathbf{I} + \dfrac{1}{M-1}(\mathbf{I}-\mathbf{J}).
\end{align*}

Coming back again to our expression of the weighted least squares, multiplying $(\mathbf{Z}^\top\mathbf{WZ})^{-1}$ by $\mathbf{Z}^\top\mathbf{W}$ creates a matrix of weights to apply to $\mathbf{y}$. Here, we consider without loss of generality the Shapley value for a single feature $j$ (any feature), written $\phi_j$ and thus only need to consider a single row of this $M \times 2^M$ matrix. This is equivalent to using only the $j^{th}$ row of $(\mathbf{Z}^\top\mathbf{WZ})^{-1}$: 
\begin{equation*}
    \phi_j = \sum_{i=1}^{2^M} (\mathbf{Z}^\top\mathbf{WZ})^{-1}_{(j,*)} \mathbf{Z}^\top\mathbf{W}_{(*,i)} y_i.
\end{equation*}

\noindent We shall first place our attention on a single component $(\mathbf{Z}^\top\mathbf{WZ})^{-1}_{(j,*)} (\mathbf{Z}^\top\mathbf{W})_{(*,i)}$ of the sum for now (for any $i$) and use the formula of $(\mathbf{Z}^\top\mathbf{WZ})^{-1}$ derived above to rewrite it. Letting $\textbf{1}_{Z_{i,j}=1}=1$ if $Z_{i,j}=1$ (feature $j$ of coalition $i$ is included) and $0$ otherwise, this yields:
\begin{align*}
    (\mathbf{Z}^\top\mathbf{WZ})^{-1}_{(j,*)} (\mathbf{Z}^\top\mathbf{W})_{(*,i)} 
    &= \Big[\textbf{1}_{Z_{i,j}=1} - \dfrac{(s_i - \textbf{1}_{Z_{i,j}=1})} {M-1}\Big] k(M,s_i) \\
    &= \dfrac{M-1}{M \cdot s_i} \cdot \binom{M-1}{s_i}^{-1} \textbf{1}_{Z_{i,j}=1} - (M \cdot s_i)^{-1} \cdot \binom{M-1}{s_i}^{-1} (s_i - \textbf{1}_{Z_{i,j}=1}) \\
    &= \dfrac{(M-1)(M-s_i-1)!(s_i-1)!}{M!}\textbf{1}_{Z_{i,j}=1} - \dfrac{(s_i - \textbf{1}_{Z_{i,j}=1}) (M-s_i-1)(s_i-1)!}{M!} \\
    &= \dfrac{(M-s_i-1)!(s_i-1)!}{M!}[(M-1)\textbf{1}_{Z_{i,j}=1} - (s_i-\textbf{1}_{Z_{i,j}=1})]
\end{align*}
When $\textbf{1}_{Z_{i,j}=1}=0$ we get 
\begin{equation*}
    \dfrac{(M-s_i-1)!(s_i-1)!}{M!}[0-s_i] = -\dfrac{(M-s_i-1)!s_i!}{M!}.
\end{equation*}
When $\textbf{1}_{Z_{i,j}=1}=1$ we get
\begin{equation*}
    \dfrac{(M-s_i-1)!(s_i-1)!}{M!}[(M-1)-(s_i-1)] = \dfrac{(M-s_i-2)!(s_i-1)!}{M!}.
\end{equation*}

So, back to our formula for the coefficient $\phi_j$, we write $y_i=f_v(S_i)$, define $\mathcal{N} = \{1,\ldots,M\}$ and use the simplified form of each component of the sum, to obtain:
\begin{align*}
    \phi_j &=  \sum_{S_i \subseteq \mathcal{N}, j \in S_i} \dfrac{(M-s_i-2)!(s_i-1)!}{M!} f_v(S_i) + \sum_{S_i \subseteq N, j \notin S_i} -\dfrac{(M-s_i-1)!s_i!}{M!} \cdot f_v(S_i) \\
    &= \sum_{S_i \subseteq \mathcal{N}, j \notin S_i} \dfrac{(M-s_i-1)!s_i!}{M!} f_v(S_i \cup \{j\}) - \sum_{S_i \subseteq N, j \notin S_i} \dfrac{(M-s_i-1)!s_i!}{M!} \cdot f_v(S_i) \\
    &= \sum_{S_i \subseteq \mathcal{N}, j \notin S_i} \dfrac{(M-s_i-1)!s_i!}{M!} \Big[f_v(S_i\cup \{j\}) - f_v(S_i)\Big].
\end{align*}

\noindent Assuming model linearity and feature independence,
\begin{equation*}
f_v(S_i\cup \{j\}) - f_v(S_i) = \mathbb{E}[f_v(\mathbf{X}, \mathbf{A}_{S_i} \cup \{j\})| \mathbf{X}_{S_i} \cup \{j\}] - \mathbb{E}[f_v(\mathbf{X}, \mathbf{A}_{S_i})| \mathbf{X}_{S_i}],
\end{equation*}
which is the  form of Shapley value $\phi_j$ in the case of graphs. This proves that GraphSVX computes the extension of  Shapley values to graphs via the coefficients of its weighted linear regression $g$. This however requires GraphSVX to use all $2^M$ samples for an exact computation. As this is computationally expensive, as we presented in the main paper, we have to resort to an approximation.


\section{Evaluation}
\label{A_eval}

In this section, we provide additional information about the evaluation phase. All experiments are conducted on a Linux machine with an Nvidia Tesla V100-PCIE-32GB model, driver version 455.32.00 and Cuda 11.1. GraphSVX is implemented using Pytorch 1.6.0.

\begin{table}[h]
    \centering
    \setlength{\tabcolsep}{3mm}{
        \begin{tabular}{ccccccc}
            \toprule
            &\multicolumn{4}{c}{\textbf{Node Classification}} & \multicolumn{2}{c}{\textbf{Graph Classification}}\\
            \cmidrule(l){2-5} \cmidrule(l){6-7} 
            & \textsl{BA-Shapes} & \textsl{BA-Community} & \textsl{Tree-Cycles} & \textsl{Tree-Grid} & \textsl{BA-2motifs} & \textsl{MUTAG} \\
            \hline
            \#graphs & 1 & 1 & 1 &1 &1,000 &4,337 \\
            \#classes &4 &8 &2 &2 &2&2\\
            \#nodes & 700 &1,400 &871 &1,231 & 25,000& 131,488\\
            \#features & 10 & 10 & 10 & 10 & 10 & 14\\
            \#edges &4,110 &8,920 &1,950 & 3,410 & 51,392& 266,894\\
            \bottomrule
        \end{tabular}
    }
    \vspace{0.25em}
    \caption{Statistics of datasets with ground truth.}
    \label{tab:eval2_stats}
    \vspace{-3em}
\end{table}

\subsection{Synthetic and real datasets with ground truth}

We follow the setting exposed in \cite{ying2019gnnexplainer} and construct four kinds of node classification datasets: \textsl{BA-Shapes}, \textsl{BA-Community}, \textsl{Tree-Cycles}, \textsl{Tree-Grids} as well as two graph classification datasets: \textsl{BA-2motifs} and \textsl{MUTAG}. Table \ref{tab:eval2_stats} displays the statistics of these datasets. 

\begin{itemize}
    \item \textsl{BA-Shapes} consists of a single graph with Barabasi-Albert (BA) base graph composed of 300 nodes and 80 “house”-structured motifs. These motifs are attached to randomly selected nodes from the BA graph. In addition, $0.1N$ random edges are added to perturb the graph. Overall, nodes are assigned to one of 4 classes based on their structural role. All the ones belonging to the base graph are labelled with 0. For those in the house motifs, they are labelled with 1,2,3 if they belong respectively to the top/middle/bottom of the “house”. Features are constant across nodes.
    \item \textsl{BA-Community} dataset is a union of two BA-Shapes graphs. Node features are sampled using two Gaussian distributions, one for each BA-Shapes graph. Nodes are labelled based on their structural role and community membership, leading to 8 classes in total.
    \item \textsl{Tree-Cycles} dataset has an 8-level balanced binary tree as base graph. A set of 80 cycle motifs of size 6 are attached to randomly selected nodes from the base graph.
    \item \textsl{Tree-Grid} is constructed in the same way as \textsl{Tree-Cycles}, except that the cycle motifs are replaced by 3-by-3 grid motifs. Like the above, it is designed for node classification. 
    \item \textsl{BA-2motifs} is made for graph classification tasks and contains 800 graphs. We adopt the BA graph as base. Half of the graphs are attached with “house” motifs and the rest with five-node cycle motifs. Graphs are assigned to one of 2 classes according to the type of attached motifs.
    \item \textsl{MUTAG} is a real-life dataset for graph classification composed of 4,337 molecule graphs, each assigned to one of 2 classes given its mutagenic effect on the Gram-negative bacterium S.typhimurium ~\cite{riesen2008iam}. According to ~\cite{debnath1991structure}, carbon rings with chemical groups NO2 or NH2 are mutagenic. Since carbon rings exist in both mutagenic and nonmutagenic graphs, they are not discriminative and are thus treated as the shared base graph. NH2, NO2 are viewed as motifs for the mutagen graphs. Regarding nonmutagenic ones, there are no explicit motifs so we do not consider them.
\end{itemize}

\noindent \textbf{Experimental setup}. 
We share a single GNN model architecture across all datasets. We write as $\text{GNN}(a, b, f)$ a GNN layer with input dimension $a$, $b$ output neurons, and activation function $f$. A similar notation applies for a fully connected layer, $\text{FC}(a, b, f)$. For node classification, the structure is the following:  GNN(10, 20, ReLU)-GNN(20, 20, ReLU)-GNN(20, 20, ReLU)-FC(20, \#label, softmax). For graph classification, GNN(10, 20, ReLU)-GNN(20, 20, ReLU)-GNN(20, 20, ReLU)-Maxpooling-FC(20, \#label, softmax). Each model is trained for 1,000 epochs with Adam optimizer and initial learning rate $1.0\times 10^{-3}$. For all datasets, we use a train/validation/test split of 80/10/10\%. The accuracy metric reached on each dataset is displayed in Table~\ref{accuracy_perf_GNN_syn}. The results are good enough to make explanations of this model relevant. 

\begin{table}[h]
    \centering
    \setlength{\tabcolsep}{3mm}{
        \begin{tabular}{ccccccc}
            \toprule
            &\multicolumn{4}{c}{\textbf{Node Classification}} & \multicolumn{2}{c}{\textbf{Graph Classification}} \\
            \cmidrule(l){2-5} \cmidrule(l){6-7} 
            & \textsl{BA-Shapes} & \textsl{BA-Community} & \textsl{Tree-Cycles} & \textsl{Tree-Grid} & \textsl{BA-2motifs} & \textsl{MUTAG} \\
            \hline
            Training  & 0.97 &0.96  &0.98 &0.90  &1.00  &0.86 \\
            Validation&0.98  &0.87  &0.98  &0.89  &1.00  &0.84 \\
            Testing  &0.96  &0.88  &0.98 &0.87  &1.00 &0.85\\
            \bottomrule
        \end{tabular}
    }
    \vspace{0.25em}
    \caption{Accuracy performance of GNN models.}
    \label{accuracy_perf_GNN_syn}
\end{table}


\subsection{Real-life datasets without ground truth}

We use two real-world datasets: \textsl{Cora} and \textsl{PubMed}, whose statistical details are provided in Table \ref{stats_data_eval1}. We add noise to these datasets for the next experiments and obtain six datasets in total.
\begin{itemize}
    \item \textsl{Cora} is a citation graph where nodes represent machine learning papers and edges represent citations between pairs of papers. The task involved is document classification where the goal is to categorise each paper into one of 7 categories. Each feature indicates the absence/presence of the corresponding word in its abstract. 
    \item \textsl{PubMed} is also a publication dataset where each feature indicates the TF-IDF value of the corresponding word. The task is also document classification into one of 3 classes. 
    \item \textsl{NFCora} is the \textsl{Cora} dataset where we have added 20\% of noisy features (287) to all nodes. These new noisy features are binary and have a similar distribution as existing features. We sample their value using a Bernoulli(p) distribution, where $p=0.013$. 
    \item \textsl{NFPubMed} is the \textsl{PubMed} dataset where we have added 20\% of noisy features (100) to all nodes. These new noisy features are continuous and have a similar distribution as existing features: they follow a Uniform distribution within [0,1) with probability $0.1$ and take a null value otherwise. 
    \item \textsl{NNCora} is the \textsl{Cora} dataset where we have added 20\% of noisy nodes to the graph (541). Each new noisy node is connected to an existing node with probability $0.003$, to present similar connectedness properties as existing nodes. Features of these new nodes are defined as for \textsl{NFCora}. 
    \item \textsl{NNPubMed} is the \textsl{PubMed} dataset where we have added 20\% of noisy nodes to the graph. Each new node is connected to an existing node according to a Bernoulli($0.0005$) distribution, to present similar connectedness properties as existing nodes. Features of these new nodes are defined as for \textsl{NFPubMed}. 
\end{itemize}

\begin{table}[h]
    \centering
    \normalsize
    \setlength{\tabcolsep}{7mm}{
        \begin{tabular}{ccc}
            \toprule
            Datasets & \textsl{Cora} & \textsl{Pubmed} \\
            \hline
            \#classes & 7 & 3 \\
            \#nodes & 2,708 & 19,717 \\
            \#features & 1,433 & 500 \\
            \#edges & 5,429 & 44,338 3\\
            \bottomrule
        \end{tabular}
    }
    \vspace{0.25em}
    \caption{Statistics of \textsl{Cora} and \textsl{Pubmed} datasets.}
    \label{stats_data_eval1}
\end{table}

\begin{table}[h]
    \bigskip
    \centering
    \normalsize
    \setlength{\tabcolsep}{3mm}{
        \begin{tabular}{ccccccc}
            \toprule
            Model-Data & \textsl{Cora} & \textsl{Pubmed} & \textsl{NFCora} & \textsl{NFPubMed} & \textsl{NNCora} & \textsl{NNPubMed} \\
            \hline
            GCN-Train & 0.91 & 0.85 & 0.92 & 0.85 & 0.90 & 0.73  \\
            GCN-Val & 0.88 & 0.89 & 0.88 & 0.88 & 0.86 & 0.77 \\
            GCN-Test & 0.86 & 0.88 & 0.86 & 0.86 & 0.84 & 0.77\\
            \hline
            GAT-Train & 0.78 & 0.83 & 0.79 & 0.83 & 0.74 & 0.76 \\
            GAT-Val & 0.88 & 0.88  & 0.88 & 0.88 & 0.86 & 0.85 \\
            GAT-Test & 0.86 & 0.85 &  0.86 & 0.85 & 0.86 & 0.83\\
            \bottomrule
        \end{tabular}
    }
    \vspace{0.25em}
    \caption{Performance of GCN/GAT models -- on original and noisy \textsl{Cora} and \textsl{PubMed} datasets. \textsl{NFCora} signifies \textsl{Cora} dataset with Noisy Features, \textsl{NNCora} points to \textsl{Cora} dataset with Noisy Nodes added. Similarly for \textsl{NFPubMed} and \textsl{NNPubMed}. }
    \label{perf_eval1}
\end{table}

\noindent \textbf{Experimental setup}. We use the same notation as before to describe GCN and GAT models' architecture, except for GAT where we add one last element representing the number of attention heads. For all models below, we use an Adam optimiser and a negative log-likelihood loss function. 

For \textsl{Cora}, we train a 2-layer GCN model GCN(1433, 16, ReLU)-GCN(16, 7, ReLU) with  50 epochs, dropout$=0.5$, learning rate (lr) $= 0.01$ and weight-decay (wd) $=5e-4$. We also train a 2-layer GAT model with structure: GAT(1433, 8, ReLU, 8)-GAT(8, 7, ReLU, 1) with 80 epochs, dropout$=0.6$, lr$=0.005$, wd$=5e-4$.

For \textsl{PubMed}, we train a 2-layer GCN model GCN(500, 16, ReLU)-GCN(16, 3, ReLU) with dropout$=0.5$, $150$ epochs, lr$=0.01$, wd $= 5e-4$. We also train a 2-layer GAT model with structure: GAT(500, 8, ReLU, 8)-GAT(8, 3, ReLU, 8) with 120 epochs, dropout$=0.6$, lr$=0.005$ and wd$=5e-4$. \\

The results provided in the paper are obtained by constructing a Kernel Density Estimator (KDE) plot from the distribution of noisy nodes/features that are included in top-k explanations for $50$ different test samples (50 node predictions explained), using the above trained GAT model. 

\subsection{Ablation study and hyperparameters tuning}

During the evaluation process, we have tested the impact of different hyperparameters as well as different versions of GraphSVX so as to validate the improvements we thought of. Here is a non-exhaustive list of the conclusions we were able to draw for this analysis:

\begin{itemize}
    
    \item We show empirically that \textbf{reducing the number of features and nodes} considered from $F + N$ (all) to $D + B$ ($D$: $k$-hop neighbours; $B$: features not within a confidence interval around the mean value or $0$) is extremely significant. On average, features that are removed from consideration occupy $5\%$ of the explainable part $f_v(\mathbf{X},\mathbf{A}) - \mathbb{E}[f_v(\mathbf{X},\mathbf{A})]$ when considered. Similarly, all nodes that are removed occupy 8\% of the explainable part. 
    
    \item Capturing the \textbf{indirect effect} of nodes in \textsc{Gen()} augments performance by 19\% (on average) on \textsl{TreeCycles} and \textsl{TreeGrid} datasets, where higher-order relations matter more and more coalitions are sampled. For \textsl{BA-Shapes} and \textsl{BA-Community}, where most nodes in the ground truth are 1-hop neighbours, higher order relation are less important and adding indirect effect has no effect on performance. 

    \item The extension involving considering the influence of a certain \textbf{ feature from the subgraph perspective} instead of $v$'s feature value is highly relevant, especially for our evaluation framework as we add noisy features using Gaussian distributions on all nodes. It is thus easier to spot noisy features on \textsl{Cora} and \textsl{PubMed}. On \textsl{BA-Community}, we demonstrated its higher ability to capture global feature importance for similar reasons (important features are defined using the whole graph and not single nodes), as it achieved $100\%$ accuracy for essential features against $65\%$ for the standard method.
    
    \item Enforcing a high (close to infinity) weight to the masks $\mathbf{1}$ and $\mathbf{0}$ to satisfy the \textit{efficiency} axiom is not  necessary. Identical explanations are reached most of the time when these coalitions are not sampled. 
    
    \item Weighted Least Squares (WLS) and Weighted Linear Regression (WLR) yield equivalent results. In different terms, the learning method in the \textbf{explanation generator} does not impact much the results, although WLR yields slightly better ones. Furthermore, weighted Lasso is not extremely useful for synthetic datasets, rather small and sparse. 
    
    \item The \textbf{base value}, or constant of our explanation model $g$, approximates the expected model prediction, as stated in the paper. For Cora, the class repartition is the following: $[0.13, 0.09, 0.15, 0.29, 0.15, 0.11, 0.07]$, for the seven targets. The base value for all classes, obtained via $g$ are: $[0.12, 0.10, 0.15, 0.27, 0.15, 0.12, 0.08]$, which is a great approximation. 
    
    \item \textbf{GAT} is more effective than \textbf{GCN} for our evaluation on real-world datasets without a ground truth, and yields better results for most baselines. 
    
    \item The \textbf{seed} chosen to generate pseudo-random numbers and thus be able to replicate identical experiments impact significantly the results for methods with high variability when a small test sample is chosen. We choose a large test sample to counter this. 
    
    \item We observe in Table \ref{tab:MASK_ablation} that our \textsc{SmarterSeparate} \textbf{mask generator} outperforms sampling baselines on all tasks - for a same number of samples $P << 2^{B+D} $ (except \textsc{All} which uses $\rightarrow 2^{B+D}$).  \textsc{All} approximately samples every possible combinations of nodes and features. \textsc{Random} simply chooses randomly binary masks. \textsc{Smart} samples in priority masks with a high weight (wrt $g)$. \textsc{SmartSeparate} additionally separates the effect of nodes and features. Finally, \textsc{SmarterSeparate} is the current approach, described in the paper. It uses a smart space allocation algorithm on top of \textsc{SmartSeparate}.
    
\end{itemize}

\begin{table}[H]
    \centering
    \begin{tabular}{ccccc}
        \toprule
         $\textsc{Mask}$ & \textsl{BA-Shapes} & \textsl{BA-Community} & \textsl{Tree-Cycles} & \textsl{Tree-Grid}   \\
         \hline
         \textsc{SmarterSeparate} & 0.99 & 0.93 & 0.97 & 0.93 \\ 
         \textsc{SmartSeparate} & 0.94 & 0.92 & 0.85 & 0.83 \\ 
         \textsc{Smart} & 0.93 & 0.91 & 0.81 & 0.79 \\ 
         \textsc{Random} & 0.84 & 0.75 & 0.74 & 0.64 \\ 
         \textsc{All} & 0.80 & 0.79 & 0.87 & 0.77 \\ 
         \bottomrule
    \end{tabular}
    \vspace{0.25em}
    \caption{ \textsc{Random} is improved by \textsc{Smart}, so sampling masks of high weight (few or many variables) is relevant. \textsc{Smart} is itself improved by \textsc{SmartSeparate}, which shows that separating nodes and features in coalitions increases performance for an identical number of coalitions $P$---in fact reduces complexity. Finally, \textsc{Smarter} is improved by \textsc{SmarterSeparate}, which justify the efficiency of the smart space allocation algorithm.}
    \label{tab:MASK_ablation}
\end{table}

\noindent Alongside with the Table \ref{tab:MASK_ablation}, we remark that increasing $P$ \textbf{(number of samples)} and $S$ \textbf{(maximum order of coalitions favoured)} is relevant until a certain point, where performance starts to slightly decrease and often converges to \textsc{All} coalition results. For example, on \textsl{Tree-Cycles}, the performance of default GraphSVX for $500$ samples is $0.83$, for $1,500$ samples $0.95$ and for $3,000$ samples $0.88$; while GraphSVX with \textsc{All} method yields $0.87$. For simple datasets (\textsl{BA-Shapes}, \textsl{BA-Community}), few samples is enough to get good accuracy, meaning when looking only at individual effect ($S=1$). Increasing $S$ and $num$\_$samples$ does not change performance, it only increases computational time. Nevertheless, on slightly more complex datasets (\textsl{Tree-Cycles}, \textsl{Tree-Grid}), optimal performance is reached when considering all coalitions of higher-order ($S=4$), which require more coalitions to be sampled. \\

To continue with \textbf{parameter sensitivity}, when testing robustness to noise, the proportion of noisy features and nodes introduced, as well as the connectivity of the nodes, the distribution of the features and the number/proportion of most important features we look at all have a significant impact on results. Nervertheless, this impact is consistent across all baselines. We therefore chose a configuration that appeared relevant, with enough noise introduced to differentiate between good and bad explainers.  \\

\noindent
\textbf{Running times} for explanation of a single node are displayed in Table \ref{tab:time} for all synthetic node classification datasets. Comparing running times with baselines \cite{luo2020parameterized, ying2019gnnexplainer}, GraphSVX is still slower than PGExplainer, which is very scalable, but matches GNNExplainer thanks to our efficient approximation, especially when the data is relatively sparse (e.g., \textsl{BA-Shapes} or \textsl{Tree-Cycles}). With a 2-hop subgraph approximation \ref{tab:time2}, it is faster than GNNExplainer while maintaining state-of-the-art performance. \\

\begin{table}[h]
    \centering
    \vspace{-1em}
    \begin{tabular}{cccccc}
        \toprule
          & & \textsl{BA-Shapes} & \textsl{BA-Community} & \textsl{Tree-Cycles} & \textsl{Tree-Grid}  \\
          \midrule
         GraphSVX: & Time (s) & 2.12 & 8.31 & 3.65 & 6.22  \\
         & Samples & 400 & 800 & 1,400 & 1,500  \\
         GNNExplainer: & Time (s) & 6.63 & 7.08 & 6.89 & 7.31 \\
         \bottomrule
    \end{tabular}
    \vspace{0.25em}
    \caption{Average running time for the explanation of a single node with GraphSVX (3-hop subgraph approximation) and GNNExplainer.}
    \label{tab:time2}
\end{table}

\begin{table}[h]
    \centering
    \vspace{-2em}
    \begin{tabular}{ccccc}
        \toprule
         & \textsl{BA-Shapes} & \textsl{BA-Community} & \textsl{Tree-Cycles} & \textsl{Tree-Grid} \\
         \midrule
         Time (s) & 0.30 & 1.09 & 1.21 & 1.87 \\
         Samples & 65 & 100 & 300 & 400 \\
         \bottomrule
    \end{tabular}
    \caption{Average running time for the  explanations of a single node with GraphSVX (2-hop subgraph).}
    \label{tab:time}
\end{table}


\vspace{2mm}

\section{Properties}
\label{A_prop}

\subsection{Desirable properties of explanations from a machine learning perspective}


\begin{itemize}
    \item \textbf{Accuracy} and \textbf{Fidelity}: How relevant is an explanation?
    And for an explanation model, how well does it approximate the prediction of the black box model? There must be a ground truth explanation or a human judge to assess the accuracy of an explainer \cite{burkart2021survey}. 
    \item \textbf{Robustness}: How different are explanations for similar models/instances? It encapsulates consistency, stability, and resistance to noise. High robustness means that slight variations in the features of an instance, or in the model functioning, do not substantially change the explanation. Sampling or random initialisation of mask generator goes usually against robustness \cite{molnar2020interpretable}. 
    \item \textbf{Certainty}: Does the explanation reflect the certainty of the machine learning model? In different terms, does the explanation indicate the confidence of the model for the explained instance prediction? In the paper, we refer to it as \textbf{decomposability}---the most common way to reflect certainty
    \cite{robnik2018perturbation}. 
    \item \textbf{Meaningfulness}: How well does the explanation reflect the importance of features/nodes? Here, meaningful explanations means ``with a nice interpretation/signification'' \cite{yuan2020explainability}.
    \item \textbf{Representativeness} (or \textbf{Global}): How general is an explanation? People like explanations that cover many instances or the general functioning of the model. This global scope is concerned with overall actions and usually provides a pattern that the prediction model discovers \cite{miller2019explanation}.
    \item \textbf{Comprehensibility}: How well do humans understand explanations? This property comprises several aspects that are further analysed below \cite{molnar2020interpretable}. 
\end{itemize}

\subsection{Human-centric explanations}

\begin{itemize}
    \item \textbf{Contrastive}: Humans usually do not ask why a certain prediction was made, but why it was made instead of another prediction. In different terms, humans want to compare it against another instance's prediction (could be an artificial one) \cite{lipton1990contrastive}. 
    \item \textbf{Selective}: People do not expect explanations to cover the actual and complete list of causes of an event. They are used to being given one or two key causes as the explanation \cite{ustun2014methods}. 
    \item \textbf{Social}: Explanations are part of an interaction between the explainer and the receiver. The social context determines the content and nature of explanations; they should be targeted to the audience \cite{miller2017explainable}.
\end{itemize}

\subsection{Explainers' properties}

\begin{table*}[h]
    \vspace{-1em}
    \centering
    \normalsize
    \begin{tabular}{lcccr}
    \setlength{\tabcolsep}{2mm}
    \begin{tabular}{lcccccccc}
        \toprule
        Method & Task & Target & \textit{Decomposable} & \textit{Global} & \textit{Robust} & \textit{Meaningful} & \textit{Human-centric} \\
        \midrule
        GNNExplainer & GC/NC & E/F & $\times$ & \checkmark & $\times$ & $\times$ & \checkmark \\
        PGExplainer & GC/NC & E & $\times$ & \checkmark & $\times$ & $\times$ & \checkmark \\
        GraphLIME & NC & F & $\times$ & $\times$ & $\times$ & \checkmark & \checkmark \\
        PGM-Explainer & GC/NC & N & $\times$ & $\times$ & $\times$ & $\times$ & \checkmark \checkmark \\
        XGNN & GC & N & $\times$ & \checkmark & $\times$ & $\times$ & \checkmark \\
        \midrule
        GraphSVX & GC/NC & N/F & \checkmark & \checkmark & \checkmark & \checkmark & \checkmark \checkmark \checkmark  \\
        \bottomrule
        \end{tabular}
    \end{tabular}
    \vspace{0.25em}
    \caption{Comparison of explanation methods for GNNs. GC/NC denote graph and node classification tasks. Target explanation points to features (F), edges (E) or nodes (N). \textit{Human-centric} is decomposed into social, contrastive and selective---and one checkmark is attributed for each property that holds. We regroup under \textit{Robust}: Consistent, Stable and robust to noise.}
    \label{prop_baselines}
\end{table*}

In Table \ref{prop_baselines}, we show how explainers belonging to the unified framework proposed in the paper satisfy these properties. Below, we develop why/how GraphSVX (and the baselines) satisfy them. \\

\noindent \textbf{Fidel and accurate}. GraphSVX achieves state-of-the-art results on all but one task, which proves its high accuracy. It is also fidel to the GNN model locally, as the explanation model $g$ reaches $R^2 > 0.90$ for all tasks. \\

\noindent \textbf{Decomposable}. GraphSVX is the only fairly distributed method. In practice, we verify that $\sum_i \phi_i = f_v(\mathbf{X},\mathbf{A})$ for all predictions, which is naturally enforced by the high weight given to both null and full coalitions. \\

\noindent \textbf{Meaningful}. Along with GraphLIME, GraphSVX is the only meaningful method---it outputs the ``fair'' contribution of each feature/node towards the prediction (with respect to the average prediction). Overall, it provides more information than all methods, including GraphLIME. For instance, on synthetic datasets, the importance granted to nodes belonging to a motif achieves 85\% of the explainable part $f_v(X,A)-\mathbb{E}[f_v(X,A)]$ for \textsl{Tree-Grid} and \textsl{Tree-Cycle}, while it represents only 20\% on \textsl{BA-Community}, where node features and the relation to other motifs (inside another community) are also essential for prediction. This provides key information to an end-user. Note that, we consider GraphLIME as meaningful although the coefficients themselves do not have proper signification and only account for feature importance. None the less, they do give an idea of the influence of a feature in prediction. Regarding other baselines, PGExplainer and GNNExplainer output probability scores, PGM-Explainer a bayesian network, and XGNN a subgraph without further information. \\

\noindent \textbf{Global}. Similarly to GNNExplainer, GraphSVX is endowed with an extension that enables it to explain a subset of nodes together, without aggregating local explanations. Furthermore, it can study feature importance in the subgraph of the node being explained instead of the node itself, which also goes towards a more global comprehension. Two other explainers provide even more general explanations: XGNN, a true model-level explanation method and PGExplainer, which provides collective and inductive explanations. \\

\noindent \textbf{Robust}. GraphSVX is by definition consistent and stable. In addition to showing its robustness to noise in the experimental evaluation section, we further investigate those two properties. We assess stability by computing variance statistics on synthetic datasets explanations, for nodes occupying the same role in a motif. Explanations are very stable across all datasets, with a variance in accuracy always inferior to $0.1$. Consistency is less trivial to examine. We compare the explanations obtained for the GCN and GAT models on \textsl{Cora}, and compute the intersection over union (IOU) score ($0.76$) between the two sets of top-$20\%$ (or top $10$) most important features and nodes. The core differences in behaviour and performance between the GCN and GAT models prevents us from obtaining a better result. Consistency is tricky to assess for every explainer. We justify theoretically why baseline explainers are not considered as robust (w.r.t. to GraphSVX). GraphLIME and PGExplainer use sampling without any approximation guarantees. XGNN uses a reinforcement learning approach, a candidate node set and several deep learning models to produce subgraph explanations. PGExplainer initialises randomly the parameters of the mask generator and uses intermediate model representations, which could respectively lead to local optimum and differences across similar models. Similarly for GNNExplainer but in a smaller extent since it does not use intermediate representations. \\

\noindent \textbf{Human-centric}. Let's split this part into different sub-properties. (1) \textbf{Selective}. Instead of fitting a WLR explanation model $g$, we fit instead a weighted Lasso regression, which produces sparser results, more easily comprehensible by humans \cite{ustun2014methods}. (2) \textbf{Contrastive}. As detailed in the paper, we extend GraphSVX to enable explaining an instance with respect to another instance. (3) \textbf{Social}. Explanations should be adapted to the target audience. Here, we offer complete explanations appropriate for domain experts, which answers the recent European regulations concerning the legal right to explanation: GDPR \cite{goodman2016eu}. We also provide for non-domain expert selective explanations with intuitive visualisations. Furthermore, the pipeline is flexible. We can fully regulate the amount of importance granted to nodes and features, either via shutting down completely the importance given to either category, or by rescaling it:
\begin{equation*}
    \boldsymbol{\phi}_{\text{node}} = \boldsymbol{\phi}_{\text{node}} \cdot \dfrac{\alpha * e}{\sum(\boldsymbol{\phi}_{\text{node}})} \quad \text{and} \quad
    \boldsymbol{\phi}_{\text{feat}} = \boldsymbol{\phi}_{\text{feat}} \cdot \dfrac{ (1-\alpha) * e}{\sum(\boldsymbol{\phi}_{\text{feat}})}, 
\end{equation*}
with $e = (f_v(\mathbf{X}, \mathbf{A}) - \mathbb{E}[f_v(\mathbf{X},\mathbf{A}))$ and $\alpha \in [0,1]$ the regularisation coefficient. 
In the above table, this leads to $3$ checkmarks out of $3$ for GraphSVX.  PGM-Explainer is selective and social, as it produces a causal graph with feature selection, but is not contrastive. GNNExplainer, XGNN, GraphLIME and PGExplainer are simply selective.  


\end{document}